\tikzstyle{vertex}=[circle, draw, inner sep=0pt, minimum size=16pt]
\newcommand{\vertex}{\node[vertex]}
\def\hb{\hbox to 10.7 cm{}}
\newcommand{\myendofproof}{~\hfill$\Box$}
\renewcommand{\geq}{\geqslant}
\renewcommand{\phi}{\varphi}
\newcommand{\prof}[1]{{\bm{#1}}}
\newcommand{\BAF}{\text{\it BAF}}   
\newcommand{\Arg}{\text{\it Arg}}          
\newcommand{\attacks}{\rightharpoonup}     
\newcommand{\supports}{\rightsquigarrow}
\renewcommand{\sup}{\text{\it sup}}
\newcommand{\Sup}{\text{\it Sup}}
\newcommand{\card}[1]{\lvert #1 \rvert} 
\newcommand{\CL}[1]{\text{\rm CL}(#1)}
\newtheorem{definition}{Definition}
\newtheorem{theorem}{Theorem}
\newtheorem{lemma}[theorem]{Lemma}
\newtheorem{proposition}[theorem]{Proposition}
\newtheorem{fact}[theorem]{Fact}
\newtheorem{myexample}{Example}
\newenvironment{example}{\begin{myexample}\rm}{\hfill$\triangle$\end{myexample}}
\title{Collective Argumentation: The Case of Aggregating Support-Relations of Bipolar Argumentation Frameworks}
\author{Weiwei Chen
\institute{Institute of Logic and Cognition and Department of Philosophy\\
Sun Yat-sen University\\
Guangzhou, China}
\email{chenww26@mail2.sysu.edu.cn}
}
\begin{document}



\maketitle
\begin{abstract}
In many real-life situations that involve exchanges of arguments, individuals may differ on their assessment of which supports between the arguments are in fact justified, i.e., they put forward different support-relations. 
When confronted with such situations, we may wish to aggregate individuals' argumentation views on support-relations into a collective view, which is acceptable to the group.
In this paper, we assume that under bipolar argumentation frameworks, individuals are equipped with a set of arguments and a set of attacks between arguments, but with possibly different support-relations. 
Using the methodology in social choice theory, we analyze what semantic properties of bipolar argumentation frameworks can be preserved by aggregation rules during the aggregation of support-relations. 
\end{abstract}



\section{Introduction}\label{sec:introduction}
%
The attack relation has played a significant role in formal argumentation~\cite{BesnardHunter2008,DungAIJ1995,RahwanSimari2009}.
However, recent years have seen a revived interest in the support relation between arguments in argumentation systems~\cite{BoellaEtAlCOMMA2010,CayrolLangasquieSchiexECSQARU2005A,CayrolLangasquieSchiexECSQARU2005B,CayrolLagasquieIJAR2013,CyrasEtAlPRIMA2017}.
In these systems, an argument can not only attack another argument, but it can also support another one.
For example, an argument can support another argument by confirming its premise or undermining one of its attackers.
The support relation between arguments is vital in modeling debates in real life. 
Due to the incompleteness of information, or different positions, agents may have different opinions regarding the support relation between arguments. To see this, consider the following example:
\begin{example}\label{example:ai}
Consider a debate regarding the possible influence of artificial intelligence (AI) to the job market.
Suppose that there are two arguments in this debate:\\

$A$: Artificial intelligence improves the degree of work automation

$B$: More people will lose their jobs due to AI \\

\noindent
Given the fact that AI is able to perform more of the tasks done by humans, some occupations will decrease. Therefore, some people hold that argument $A$ supports argument $B$. On the other hand, given that AI will improve the quality of the work being done by humans, lower the prices of goods and services, create economic advantages, and allow for the creation of new jobs in new occupations, some people hold that argument $A$ does not support argument $B$.
\end{example}
\pagebreak
In many scenarios, such as court debate, parliament debate, policy advisory committee decision-making, agents may have different opinions on which supports between arguments are acceptable, which form argumentative stances of them. When a group of agents are engaged in a debate, we may wish to aggregate stances possessed by agents to obtain a collective decision agreed on by the group.
To model the support relation between arguments, we consider the \emph{bipolar argumentation framework (BAF)}~\cite{CayrolLangasquieSchiexECSQARU2005A,CayrolLangasquieSchiexECSQARU2005B,CayrolLagasquieIJAR2013}, a formalism of Dung's abstract argumentation framework~\cite{DungAIJ1995}. 
%
Given that there is a broad discussion of the aggregation of argumentation systems with the attack relation~\cite{CosteMarquisEtAlAIJ2007,TohmeEtAlFoIKS2008,DunneEtAlCOMMA2012,ChenEndrissAIJ2019}, it is far from being clear what consensuses can be achieved when the support relation is involved in this process.
The goal of this paper is to investigate the aggregation of views of a group of agents in the context of bipolar argumentation. 
Given a set of arguments and a set of attack-relations between these arguments, agents might conflict with one another upon supports between arguments, i.e., for every pair of arguments that is being considered in a debate whether the first supports the second. In this scenario, we may wish to aggregate such support-relations. 

In this paper, we use the method from \emph{graph aggregation}, a recent discipline of social choice theory that deals with aggregating several graphs into a single output graph that constitutes a good compromise. 
Following the model introduced by Chen and Endriss~\cite{ChenEndrissAIJ2019}, we consider the preservation of properties of bipolar argumentation frameworks, 
i.e., given a property that is satisfied by individual BAFs, we study whether it can be satisfied in the BAF returned by aggregation rules. 
For some properties, we show that there is an aggregation rule or a family of aggregation rules preserve them. For some others, we show that any aggregation rule that satisfies certain basic axioms and preserves them must be a dictatorship.
%

\paragraph{Paper overview} The rest of the paper is organized as follows.
In Section~\ref{sec:baf}, we recapitulate the bipolar argumentation framework, along with its semantics.
We introduce our model for the aggregation of support-relations of bipolar argumentation frameworks in Section~\ref{sec:model}, followed by our results of preservation in Section~\ref{sec:results}.
In Section~\ref{sec:related}, we introduce some work related to our work.
Finally, in Section~\ref{sec:conclusion}, we conclude this work and point out some directions for future work.

\section{Bipolar argumentation}\label{sec:baf}
An abstract bipolar argumentation framework~\cite{CayrolLangasquieSchiexECSQARU2005A,CayrolLangasquieSchiexECSQARU2005B,CayrolLagasquieIJAR2013} is an extension of Dung's abstract argumentation framework~\cite{DungAIJ1995} in which a general support relation between arguments is added.
Formally, an abstract bipolar argumentation framework is a triple $\langle \Arg, \attacks, \supports \rangle$, where $\Arg$ is a set of arguments, $\attacks$ is a binary relation on $\Arg$, which is called the attack relation, $\supports$ is a binary relation on $\Arg$, which is called the support relation. 
Given two arguments $A, B \in \Arg$, if $A \attacks B$ holds, then we say that $A$ attacks $B$, if $A \supports B$, then we say that $A$ supports $B$. 
The attack relation and the support relation must verify the following consistency constraint: $\attacks \cap \supports = \emptyset$, which is called \emph{essential constraint}.

\begin{definition}
Let $A, B \in \Arg$, there is a sequence of supports for $B$ by $A$ iff there exists a sequence of elements $(A_1,\ldots,A_n)$ of $\Arg$ such that $n \geq 2, A = A_1, B = A_n, A_1 \supports A_2, \ldots, A_{n-1} \supports A_n$.
\end{definition}

\begin{definition}
Let $A, B \in \Arg$, a \emph{supported attack} against $B$ by $A$ is a sequence of arguments $(A_1,\ldots,A_n)$  of $\Arg$ such that $A_1 \supports, \ldots, \supports A_{n-1}$, $A_{n-1} \attacks A_n$, $A = A_1$, $A_n = B$, and $n \geq 3$.
\end{definition}

Note that if $A \attacks B$ is the case, then we say that $A$ directly attacks $B$. 

\begin{definition}
A \emph{secondary attack} against an argument $B$ by an argument $A$ is a sequence $(A_1,\ldots,A_n)$ of arguments of $\Arg$ such that $A_1 \attacks A_2$, $A_2 \supports \ldots, \supports A_n$, $A = A_1$, $A_n = B$, and $n \geq 2$.
\end{definition}

For example, in Figure~\ref{fig:attack}, $A_1$ supported attacks $E_1$, while $A_2$ secondary attacks $E_2$.
\begin{figure}[h]
\[
\begin{tabular}{c@{\qquad}c@{\qquad}c@{\qquad}c}

\begin{tikzpicture}[->,>=latex,thick,shorten >=1pt,font=\small,scale=1]
  \vertex (A) at (0,0) {$A_1$};
  \vertex (B) at (1.5,0) {$B_1$};
  \vertex (C) at (3,0) {$C_1$};
  \vertex (D) at (4.5,0) {$D_1$};
  \vertex (E) at (6,0) {$E_1$};
  \draw [dashed, ->] (A) -- (B);
  \draw [dashed, ->] (B) -- (C);
  \draw [dashed] (C) -- (D);
  \draw [->] (D) -- (E);
\end{tikzpicture}\\Supported attack\\\\
\begin{tikzpicture}[->,>=latex,thick,shorten >=1pt,font=\small,scale=1]
  \vertex (A) at (0,0) {$A_2$};
  \vertex (B) at (1.5,0) {$B_2$};
  \vertex (C) at (3,0) {$C_2$};
  \vertex (D) at (4.5,0) {$D_2$};
  \vertex (E) at (6,0) {$E_2$};
  \draw [->] (A) -- (B);
  \draw [dashed, ->] (B) -- (C);
  \draw [dashed] (C) -- (D);
  \draw [dashed, ->] (D) -- (E);
\end{tikzpicture}\\Secondary attack\\
\end{tabular}
\]
\vspace*{-5pt}\caption{Illustration of supported attack and secondary attack\label{fig:attack}}
\end{figure}
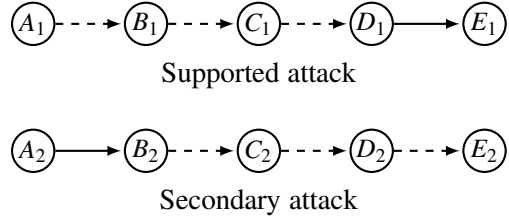

\begin{definition}
Let $\Delta \subseteq \Arg$ and $A \in \Arg$. $\Delta$ set-attacks $A$ iff there exists a supported attack or a secondary attack against $A$ from an element of $\Delta$. $\Delta$ set-supports $A$ iff there exists a sequence of supports for $A$ from an element of $\Delta$.
\end{definition}

\begin{definition}
Let $\Delta \subseteq \Arg$ be a set of arguments, $\Delta$ is conflict-free iff $\nexists A, B \in \Delta$ such that $\{A\}$ set-attacks $B$. 
\end{definition}

\begin{definition}
Let $\Delta \subseteq \Arg$ be a set of arguments, $\Delta$ is \emph{safe} iff $\nexists B \in \Arg$ such that $\Delta$ set-attacks $B$ and either $\Delta$ set-supports $B$, or $B \in \Delta$.
\end{definition}

In the context of bipolar argumentation, admissibility can be translated into d-admissibility, \linebreak s-admissibility and c-admissibility, based on different lines of coherence.
%
In the following definition, the notion of \emph{defense} is the same as classical defense, namely, we say $\Delta \subseteq \Arg$ defends the argument $B \in \Arg$, then, there is an argument $C \in \Delta$ with $C \attacks A$ for all arguments $A \in \Arg$ such that $A \attacks B$.
\begin{definition}
Let $\Delta \subseteq \Arg$ be a set of arguments, $\Delta$ is called \emph{d-admissible} iff $\Delta$ is conflict-free and defends all its elements; $\Delta$ is a d-preferred extension if it is maximal (w.r.t. set-inclusion) among all d-admissible sets.
\end{definition}

\begin{definition}
Let $\Delta \subseteq \Arg$ be a set of arguments, $\Delta$ is called \emph{s-admissible} iff $\Delta$ is safe and defends all its elements; $\Delta$ is a s-preferred extension if it is maximal (w.r.t. set-inclusion) among all s-admissible sets.
\end{definition}

Let the \emph{closure} of $\Delta \subseteq \Arg$ be $\CL{\Delta} = \{A \in \Arg\mid \text{there is a sequence of supports from }B \in \Delta\text{ to }A\}$, we say $\Delta$ is \emph{closed} iff $\Delta = \CL{\Delta}$.

\begin{definition}
Let $\Delta \subseteq \Arg$ be a set of arguments, $\Delta$ is called \emph{c-admissible} iff $\Delta$ is conflict-free, self-defending and closed; $\Delta$ is a c-preferred extension if it is maximal (w.r.t. set-inclusion) among all c-admissible sets.
\end{definition}

We restate a proposition in \cite{CayrolLangasquieSchiexECSQARU2005A} that demonstrates the relation between safety and conflict-freeness.
\begin{proposition}\label{prop:safe-closure}
Let $\Delta \subseteq \Arg$ be a set of arguments, if $\Delta$ is safe, then $\Delta$ is conflict-free. If $\Delta$ is conflict-free and closed, then $\Delta$ is safe.
\end{proposition}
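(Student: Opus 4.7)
The plan is to prove both implications by a direct contradiction argument, unpacking the relevant definitions.

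For the first implication, suppose $\Delta$ is safe and, for contradiction, that $\Delta$ is not conflict-free. Then by the definition of conflict-freeness there exist $A, B \in \Delta$ with $\{A\}$ set-attacking $B$. Since $A \in \Delta$, the witnessing supported or secondary attack from $A$ also witnesses that $\Delta$ set-attacks $B$. But $B \in \Delta$, and this is exactly the configuration forbidden by safety (the ``$B \in \Delta$'' disjunct), giving the contradiction.

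For the second implication, assume $\Delta$ is conflict-free and closed, and suppose for contradiction that $\Delta$ is not safe. Then there exists $B \in \Arg$ such that $\Delta$ set-attacks $B$ and either (i)~$B \in \Delta$, or (ii)~$\Delta$ set-supports $B$. In case (i), the set-attack is witnessed by some $A \in \Delta$ with $\{A\}$ set-attacking $B$, which together with $A, B \in \Delta$ contradicts conflict-freeness. In case (ii), the set-support of $B$ by $\Delta$ means there is a sequence of supports from some element of $\Delta$ to $B$, so $B \in \CL{\Delta}$; since $\Delta$ is closed, $\CL{\Delta} = \Delta$, hence $B \in \Delta$, reducing us to case (i) and the same contradiction.

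The whole argument is essentially a definition-chase, so there is no serious obstacle. The only small subtlety is to remember that ``safe'' forbids both disjuncts simultaneously with a set-attack, which requires handling the two cases (i) and (ii) separately in the second implication, and to use closure precisely to collapse case (ii) to case (i).
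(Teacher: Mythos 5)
Your proof is correct: it is the straightforward definition-chase, using the ``$B \in \Delta$'' disjunct of safety for the first implication and closure to collapse the set-support case to the $B \in \Delta$ case for the second. The paper itself does not prove this proposition (it restates it from Cayrol and Lagasquie-Schiex), and your argument is exactly the standard one expected there, so there is nothing to fault.
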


\begin{definition}
Let $\Delta \subseteq \Arg$ be a set of arguments, $\Delta$ is stable if and only if $\Delta$ is conflict-free and for every argument $A \in \Arg \backslash \Delta$, $\Delta$ set-attacks $A$.
\end{definition}

It is worth mentioning that in the original papers, \cite{CayrolLangasquieSchiexECSQARU2005A,CayrolLangasquieSchiexECSQARU2005B} consider a particular set of BAFs, namly acyclic BAFs, showing that such BAFs have some nice features. 
However, in this paper, we foucs on BAFs that are more general, i.e., we remove the restriction on BAFs and consider both acyclic and cyclic BAFs.
From a technical point of view, the BAFs that are acyclic have only one stable extension, which is the only preferred extension as well, while the BAFs with cycles could have more than one stable extension and will be more general.

There are several interpretations of support in the literature, including the deductive support, the necessary support, and the evidential support (see an overview in \cite{CayrolLagasquieIJAR2013}).
The deductive support~\cite{BoellaEtAlCOMMA2010} is intended to capture the intuition that given two arguments $A$ and $B$, if $A$ supports $B$, then the acceptance of $A$ implies the acceptance of $B$.
The necessary support~\cite{NouiouaRischICTAI2010,NouiouaRischSUM2011} is intended to capture the intuition that if $A \supports B$ is the case, then the the acceptance of $B$ implies the acceptance of $A$, i.e., the acceptance of $A$ is necessary to obtain the acceptance of $B$.
Finally, the evidential support~\cite{OrenNorman2008semantics,OrenEtAlCOMMA2010} proposes a new type of argument, namely \emph{prima-facie} arguments. Every standard argument is supposed to be supported by at least one \emph{prima-facie} argument, and every \emph{prima-facie} argument does not require support from other arguments. 

The supported attack is connected with deductive support.
To see this, let us come back to Figure~\ref{fig:attack}, according to the deductive support, the acceptance of $A_1$ implies the acceptance of $B_1$, and so the acceptance of $C_1$, the acceptance of $D_1$. In the meantime, the acceptance of $D_1$ implies the non-acceptance of $E_1$. Thus, the acceptance of $A_1$ implies the non-acceptance of $E_1$.
The necessary support can be taken into account by considering secondary attack. We again consider Figure~\ref{fig:attack}. First, the acceptance of $A_2$ implies the non-acceptance of $B_2$. Then, according to necessary support, the non-acceptance of $B_2$ implies the non-acceptance of $C_2$, and so the non-acceptance of $D_2$, the non-acceptance of $E_2$. Thus, the acceptance of $A_2$ implies the non-acceptance of $E_2$.

\section{The model}\label{sec:model}
Fix a finite set $\Arg$ of arguments, a set $(\attacks)$ of attacks between arguments, and a set $N = \{1,\ldots,n\}$ of $n$ agents. Each agent $i \in N$ supplies us with a set of supports $\supports_i$, which together with $\Arg$ and $(\attacks)$ gives rise to a bipolar argumentation framework $\langle \Arg, \attacks, \supports_i \rangle$, reflecting her individual views on which supports between arguments are acceptable. 
A \emph{profile} of support-relations $\prof {\supports} = (\supports_1,\ldots,\supports_n)$ is a set of support-relations provided by agents. 
An aggregation rule $F:(2^{\Arg \times \Arg})^n \to 2^{\Arg \times \Arg}$ is a function that maps a given profile of support-relations into a single support-relation.
We denote $N_{\sup}^{\prof {\supports}}$ by the set of agents who accept $\sup$ under profile $\prof {\supports}$, i.e., $N_{\sup}^{\prof {\supports}} = \{i \in N \mid \sup \in \supports_i\}$, and $\#N_{\sup}^{\prof {\supports}}$ denotes the number of such agents.

%
Here we define desirable properties of aggregation rules.
These properties are referred as axioms in the social choice literature.
We start with formal definitions, followed by informal descriptions.

\begin{definition}
An aggregation rule $F$ is \emph{unanimous} if $\supports_1 \cap \ldots \cap\supports_n \subseteq F(\prof {\supports})$. 
\end{definition}

\begin{definition}
An aggregation rule $F$ is \emph{grounded} if $F(\prof {\supports}) \subseteq \supports_1 \cup \ldots \cup\supports_n$. 
\end{definition}

\begin{definition}
An aggregation rule $F$ is \emph{neutral} if for any profile of support-relations $\prof {\supports}$, for any pair of supports $\sup_1$, $\sup_2$, $N_{\sup_1}^{\prof {\supports}} = N_{\sup_2}^{\prof {\supports}}$ then $\sup_1 \in F(\prof {\supports})$ iff $\sup_2 \in F(\prof {\supports})$.
\end{definition}

\begin{definition}
An aggregation rule $F$ is \emph{independent} if for any pair of profiles of support-relations $\prof {\supports}_1$, $\prof {\supports}_2$, for any support $\sup$, $N_{\sup}^{\prof {\supports}_1} = N_{\sup}^{\prof {\supports}_2}$ then $\sup \in F(\prof {\supports}_1)$ iff $\sup \in F(\prof {\supports}_2)$.
\end{definition}

\begin{definition}
An aggregation rule $F$ is \emph{dictatorial} if there is an agent $i$ such that for any profile of support-relations $\prof {\supports}, F(\prof {\supports}) = \supports_i$.
\end{definition}

The \emph{unanimity} axiom states that the support agreed by all agents should be included in the collective BAF.
The \emph{groundedness} axiom expresses that all supports in the collective BAF should be supported by at least one agent.
The \emph{neutrality} axiom requires that given a profile of support-relations, any pair of supports should be treated equally in this profile.
The \emph{independent} axiom states that all support-relations should be treated equally in any profile of support-relations.
The \emph{dictatorship} axiom indicates that there is an agent who is dictatorial.
%
%
\begin{definition}
The unanimity rule is an aggregation rule $F$ with $F(\prof {\supports}) = \{\sup \in \Arg \times \Arg\mid \sup \in \supports_1 \cap \ldots \cap\supports_n\}$.
\end{definition}

\begin{definition}
Let $i \in N$ be an agent, the dictatorship rule of individual $i$ is the aggregation rule with $F(\prof {\supports}) = \supports_i$.
\end{definition}

The unanimity rule only accepts those supports approved by all agents: it is a demanding aggregation rule.
The dictatorships always return the supports submitted by dictators.

\begin{example}
Suppose that there are three agents have to decide on the acceptance of supports between four arguments.
Agent 1 supports $A \supports B$ and $B \supports C$, agent 2 supports $B \supports C$ and $C \supports D$, agent 3 supports $A \supports B$ and $C \supports D$. 
We assume that the attack relation from $D$ to $E$ is accepted by all agents. 
The scenario is illustrated in Figure~\ref{fig:majority}. 
If we apply the majority rule, then we obtain a bipolar argumentation framework consisting of the three supports $A \supports B$, $B \supports C$, and $C \supports D$. 
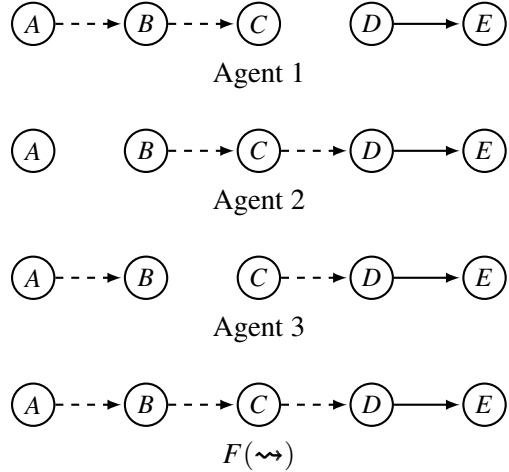
\begin{figure}
\[
\begin{tabular}{c@{\qquad}c@{\qquad}c@{\qquad}c}

\begin{tikzpicture}[->,>=latex,thick,shorten >=1pt,font=\small,scale=1]
  \vertex (A) at (0,0) {$A$};
  \vertex (B) at (1.5,0) {$B$};
  \vertex (C) at (3,0) {$C$};
  \vertex (D) at (4.5,0) {$D$};
  \vertex (E) at (6,0) {$E$};
  \draw [dashed, ->] (A) -- (B);
  \draw [dashed, ->] (B) -- (C);
  \draw [->] (D) -- (E);
\end{tikzpicture}\\Agent 1\\\\
\begin{tikzpicture}[->,>=latex,thick,shorten >=1pt,font=\small,scale=1]
  \vertex (A) at (0,0) {$A$};
  \vertex (B) at (1.5,0) {$B$};
  \vertex (C) at (3,0) {$C$};
  \vertex (D) at (4.5,0) {$D$};
  \vertex (E) at (6,0) {$E$};
  \draw [dashed, ->] (B) -- (C);
  \draw [dashed, ->] (C) -- (D);
  \draw [->] (D) -- (E);
\end{tikzpicture}\\Agent 2\\\\
\begin{tikzpicture}[->,>=latex,thick,shorten >=1pt,font=\small,scale=1]
  \vertex (A) at (0,0) {$A$};
  \vertex (B) at (1.5,0) {$B$};
  \vertex (C) at (3,0) {$C$};
  \vertex (D) at (4.5,0) {$D$};
  \vertex (E) at (6,0) {$E$};
  \draw [dashed, ->] (A) -- (B);
  \draw [dashed, ->] (C) -- (D);
  \draw [->] (D) -- (E);
\end{tikzpicture}\\Agent 3\\\\
\begin{tikzpicture}[->,>=latex,thick,shorten >=1pt,font=\small,scale=1]
  \vertex (A) at (0,0) {$A$};
  \vertex (B) at (1.5,0) {$B$};
  \vertex (C) at (3,0) {$C$};
  \vertex (D) at (4.5,0) {$D$};
  \vertex (E) at (6,0) {$E$};
  \draw [dashed, ->] (A) -- (B);
  \draw [dashed, ->] (B) -- (C);
  \draw [dashed, ->] (C) -- (D);
  \draw [->] (D) -- (E);
\end{tikzpicture}\\$F(\prof {\supports})$
\end{tabular}
\]
\vspace*{-5pt}\caption{Example for a profile with $\Arg=\{A,B,C,D,E\}$\label{fig:majority}}
\end{figure}
We observe that the set $\{A, E\}$ is conflict-free for all agents. However, it is not conflict-free in the outcome of the majority rule (which returns a set containing only the majoritarian supports) since $A$ supported attacks $E$. So conflict-freeness as a semantic property is not preserved by the majority rule in this specific example. 
\end{example}
But what about the preservation results of other semantic properties? Can they be preserved in general?
Before going any further, we introduce more semantic properties of particular interest.
%

The problem we are considering in this paper is the preservation of semantic properties in the context of bipolar argumentation.
Given a property $P \subseteq 2^{\Arg \times \Arg}$ that is a set of supports on $\Arg$, and $P$ is satisfied by all agents, whether the output of the aggregation rule satisfies $P$? A formal definition is as follows.

\begin{definition}
An aggregation rule $F$ preserves a property $P$ if whenever for every profile $\prof {\supports}$
we have that $P(\supports_i)$ for all $i \in N$, then we have $P(F(\prof {\supports}))$.
\end{definition}

The problem of preservation is a special problem of \emph{collective rationality} which has been discussed extensively in other parts of social choice, such as preference aggregation~\cite{Arrow1963}, judgment aggregation~\cite{ListPettitEP2002}, graph aggregation~\cite{EndrissGrandiAIJ2017}, as well as attack aggregation in the context of abstract argumentation~\cite{BodanzaEtAlAC2017,RahwanTohmeAAMAS2010,ChenEndrissAIJ2019}.

In the scenario where each agent possesses a BAF, agents might disagree on some details, such as whether a support between two arguments can be justified. 
Nevertheless, they may agree on some high-level features of BAFs.
The \emph{essential constraint} is an example of a high-level feature that requires no agent accepts both the attack relation and the support relation between a pair of arguments. When we observe that all agents verify such semantic feature, we would like to see what aggregation rule preserves this basic constraint under aggregation.

Given a set of arguments $\Delta \in \Arg$ that is conflict-free in every agent's bipolar argumentation framework, we may wish to preserve its conflict-freeness in the outcome. Therefore, conflict-freeness as a semantic property is of particular interest. Similar definition can be posed to the \emph{preservation of safety and admissibility}. Recall that if a set of arguments $\Delta$ is conflict-free and closed, then $\Delta$ is safe (Proposition~\ref{prop:safe-closure}). Thus, the \emph{closedness} is of interest to us as well.
Finally, we are also interested in the preservation of semantic extensions. Given a set of arguments $\Delta \subseteq \Arg$ that is an extension of a specific semantics of $\langle \Arg, \attacks, \supports_i \rangle$ for all $i \in N$, we are interested in under what circumstances $\Delta$ is an extension of such semantics of $F(\prof {\supports})$ as well. Finally, given an argument that is acceptable under a specific semantics for all agents, we would like to see whether such argument is acceptable in the collective outcome.

\section{Preservation results}\label{sec:results}
In this section, we present the preservation results for semantic properties. 
We start with \emph{essential constraint} and \emph{closedness}, two basic requirements of bipolar argumentation frameworks.
Then, we turn to consider the preservation of \emph{conflict-freeness}, followed by considering \emph{safety}, followed by considering \emph{d-admissibility}, \emph{s-admissibility}, and \emph{c-admissibility}.
Then, we proceed with the study the properties of being an extension, including the property of \emph{being a d-preferred extension}, \emph{being a s-preferred extension}, \emph{being a c-preferred extension} and \emph{being a stable extension}. Finally, we study the preservation of \emph{acceptability of arguments}. Proofs of results in this section can be found in the appendix. 

\subsection{Preservation results for essential constraint, closedness, conflict-freeness, safety and admissibility}

Recall that a bipolar AF satisfies essential constraint if it does not contain two arguments for which the first one simultaneously attacks and supports the second one.
\begin{proposition}\label{prop:es}
Every aggregation rule $F$ that is grounded preserves essential constraint.
\end{proposition}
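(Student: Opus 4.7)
The plan is to argue directly from the definitions of groundedness and the essential constraint, using a short contrapositive or contradiction argument. The intuition is that if no individual support-relation contains any pair that is also an attack, then a rule whose output is contained in the union of the individual support-relations cannot introduce such a pair either.

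First I would unpack what preservation of the essential constraint means in this setting: we fix $\Arg$ and the common attack relation $(\attacks)$, and assume that every agent $i \in N$ submits a support-relation $\supports_i$ with $(\attacks) \cap (\supports_i) = \emptyset$. We then need to show $(\attacks) \cap F(\prof{\supports}) = \emptyset$.

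Next I would argue by contradiction. Suppose some pair $\sup = (A,B)$ lies in $(\attacks) \cap F(\prof{\supports})$. By the groundedness of $F$, we have $F(\prof{\supports}) \subseteq \supports_1 \cup \cdots \cup \supports_n$, hence there exists some $i \in N$ with $\sup \in \supports_i$. But then $\sup \in (\attacks) \cap (\supports_i)$, contradicting the assumption that agent $i$'s BAF $\langle \Arg, \attacks, \supports_i \rangle$ satisfies the essential constraint. Therefore $(\attacks) \cap F(\prof{\supports}) = \emptyset$, which is exactly essential constraint preservation.

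There is essentially no obstacle here: the statement is a direct logical consequence of the two definitions, and groundedness is precisely the axiom tailored to rule out the introduction of ``new'' edges in the output. The only thing worth flagging in the write-up is that the attack relation is common to all agents (it is fixed in the model), so the conflict between $\sup \in (\attacks)$ and the agent's essential constraint is immediate; no additional case analysis on attacks is needed.
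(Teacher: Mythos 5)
Your proof is correct and follows essentially the same route as the paper's own argument: assume a pair lies in both the fixed attack relation and the collective support-relation, use groundedness to find an agent who accepts that support, and derive a contradiction with that agent's essential constraint. Nothing further is needed.
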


The closedness is also an important property. Our result demonstrates that every reasonable rule preserves it.
\begin{lemma}\label{lemma:cs}
Every aggregation rule $F$ that is grounded preserves closedness.
\end{lemma}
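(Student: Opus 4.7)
The plan is to show that if $\Delta$ is closed in each individual framework $\langle \Arg, \attacks, \supports_i \rangle$, then groundedness forces $\Delta$ to be closed in the aggregated framework $\langle \Arg, \attacks, F(\prof{\supports}) \rangle$. The driving observation is that groundedness gives $F(\prof{\supports}) \subseteq \supports_1 \cup \cdots \cup \supports_n$, so every support edge appearing in the aggregated BAF is already present in some agent's BAF, whose individual closedness of $\Delta$ we may exploit.

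Concretely, I fix any $\Delta$ closed in each $\langle \Arg, \attacks, \supports_i \rangle$ and show by induction on $n \geq 2$ that every support sequence $(A_1, \ldots, A_n)$ in $F(\prof{\supports})$ with $A_1 \in \Delta$ satisfies $A_n \in \Delta$. For the base case $n = 2$, groundedness supplies an agent $i$ with $A_1 \supports_i A_2$; combined with $A_1 \in \Delta$ and closedness of $\Delta$ in agent $i$'s BAF, this yields $A_2 \in \Delta$. For the inductive step, applying the hypothesis to the length-$(n{-}1)$ prefix places $A_{n-1}$ in $\Delta$, and repeating the one-step argument on the final edge $A_{n-1} \supports A_n$ (which groundedness locates in some $\supports_j$) delivers $A_n \in \Delta$. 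Thus $\CL{\Delta} \subseteq \Delta$ under $F(\prof{\supports})$, which is closedness.

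I do not expect any serious obstacle, since the result is essentially a closure-under-subsets remark: closedness of $\Delta$ with respect to each $\supports_i$ transfers to any relation contained in $\bigcup_i \supports_i$, and groundedness is precisely the hypothesis that $F(\prof{\supports})$ lies in this union. The only subtlety worth flagging is that successive edges along a single chain in $F(\prof{\supports})$ need not all be witnessed by the same agent, so one cannot apply individual closedness to the chain in one shot; the induction handles this by peeling off one edge at a time and invoking whichever agent witnesses that edge via groundedness.
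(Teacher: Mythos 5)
Your proof is correct and follows essentially the same route as the paper: groundedness locates each support edge of $F(\prof{\supports})$ in some individual relation $\supports_i$, and that agent's closedness of $\Delta$ does the work. The paper phrases it as a contradiction with a single violating edge $A \supports B$ (the one-step characterization of non-closedness), whereas you make the reduction from chains to single edges explicit via induction; this is only a difference in presentation, not in substance.
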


For conflict-freeness, we obtain that the unanimity rule, a demanding rule preserves the conflict-freeness of arbitrary sets of arguments.
\begin{proposition}\label{prop:cf}
The unanimity rule preserves conflict-freeness.
\end{proposition}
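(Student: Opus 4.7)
The plan is to exploit the fact that the unanimity rule returns $F(\prof{\supports}) = \supports_1 \cap \cdots \cap \supports_n$, which is contained in every individual support-relation $\supports_i$. Then I would argue by a simple monotonicity principle: if the support-relation shrinks (and the attack-relation is held fixed, as in our model), then the family of supported attacks and secondary attacks can only shrink as well. So any conflict present in the aggregated BAF must also be a conflict in each individual BAF.

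Concretely, I would let $\Delta \subseteq \Arg$ be a set of arguments that is conflict-free in $\langle \Arg, \attacks, \supports_i\rangle$ for every $i \in N$, and aim to show that $\Delta$ is conflict-free in $\langle \Arg, \attacks, F(\prof{\supports})\rangle$. Suppose, for contradiction, that there exist $A, B \in \Delta$ such that $\{A\}$ set-attacks $B$ under $F(\prof{\supports})$. By definition, this means there is either a supported attack or a secondary attack from $A$ to $B$ witnessed by a sequence $(A_1, \ldots, A_k)$ whose ``support-links'' all lie in $F(\prof{\supports}) = \bigcap_{i} \supports_i$ and whose unique attack-link lies in $\attacks$. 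Since $F(\prof{\supports}) \subseteq \supports_1$, every such link in this witnessing sequence is also a link in $\langle \Arg, \attacks, \supports_1\rangle$, so the same sequence is a supported attack or secondary attack in agent $1$'s BAF. This contradicts the assumption that $\Delta$ is conflict-free in $\langle \Arg, \attacks, \supports_1\rangle$.

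I expect no serious obstacle: the argument is essentially a one-line monotonicity observation about the auxiliary notions defined via chains of supports. The only point requiring minor care is to handle both kinds of set-attacks (supported and secondary) uniformly, and to note that the direct-attack case (secondary attack with $n=2$) is automatic because $\attacks$ is not aggregated and is therefore common to all agents and to the collective outcome.
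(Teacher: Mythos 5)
Your proposal is correct and follows essentially the same route as the paper's own proof: assume a set-attack between two elements of $\Delta$ appears in the collective outcome, observe that its attack-link is common to all agents and its support-links lie in $\bigcap_i \supports_i$ under the unanimity rule, and conclude the same witnessing sequence already violates conflict-freeness in an individual BAF. The only cosmetic difference is that you derive the contradiction for a single agent while the paper states it for all agents, which changes nothing of substance.
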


The preservation of the safety of arbitrary sets of arguments can be accomplished by the unanimity rule.
\begin{proposition}\label{prop:sf}
The unanimity rule preserves safety.
\end{proposition}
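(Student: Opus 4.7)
The plan is to exploit the fact that the unanimity rule outputs the intersection $\supports^{*} := \supports_1 \cap \cdots \cap \supports_n$, which is contained in each $\supports_i$. The first step is to establish a monotonicity principle: since any support in $\supports^{*}$ is accepted by every agent, any sequence $A_1 \supports^{*} A_2 \supports^{*} \cdots \supports^{*} A_k$ in the collective BAF is also a valid sequence of supports in every individual BAF $\langle \Arg, \attacks, \supports_i \rangle$. Combined with the fact that the attack relation $\attacks$ is common to all agents and to the collective, this yields that every supported attack and every secondary attack existing in $\langle \Arg, \attacks, \supports^{*} \rangle$ also exists in each $\langle \Arg, \attacks, \supports_i \rangle$, and likewise every witness of set-support carries over. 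Hence, if $\Delta$ set-attacks $B$ (respectively set-supports $B$) in the collective BAF, then $\Delta$ set-attacks $B$ (respectively set-supports $B$) in every individual BAF.

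The second step is to argue by contradiction. Suppose $\Delta$ is safe in each $\langle \Arg, \attacks, \supports_i \rangle$ but fails safety in the collective BAF. Then there exists $B \in \Arg$ such that $\Delta$ set-attacks $B$ collectively, and either $B \in \Delta$ or $\Delta$ set-supports $B$ collectively. Pick any agent $i \in N$. By the monotonicity principle of the previous step, $\Delta$ set-attacks $B$ in $\langle \Arg, \attacks, \supports_i \rangle$, and in the second sub-case $\Delta$ also set-supports $B$ there; in the first sub-case, $B \in \Delta$ holds independently of which support-relation we consider. In both sub-cases we obtain a violation of safety of $\Delta$ in agent $i$'s BAF, the desired contradiction.

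I do not foresee any genuine obstacle: the proof is essentially bookkeeping through the definitions of supported attack, secondary attack and set-support, together with the trivial set-theoretic observation that the unanimous outcome shrinks every support-relation. The only mildly delicate points are remembering that secondary attacks with $n=2$ degenerate to direct attacks (which do not involve $\supports$ at all and are thus preserved for free), and that the condition ``$B \in \Delta$'' in the definition of safety is independent of the support-relation, so it transfers between the collective and individual BAFs without needing the monotonicity principle.
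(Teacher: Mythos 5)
Your proof is correct, but it follows a different route from the paper. The paper disposes of this proposition in one line, deriving it from three earlier results: the unanimity rule preserves conflict-freeness (Proposition~\ref{prop:cf}), grounded rules preserve closedness (Lemma~\ref{lemma:cs}), and the fact that a conflict-free and closed set is safe (Proposition~\ref{prop:safe-closure}). You instead argue directly: since the unanimity outcome is the intersection $\supports_1\cap\cdots\cap\supports_n$, every support chain in the collective BAF lives in each individual BAF, so set-attacks and set-supports are monotone under this inclusion, and any witness violating safety of $\Delta$ in $F(\prof{\supports})$ would already violate it in every $\langle\Arg,\attacks,\supports_i\rangle$. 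Your handling of the degenerate secondary attack ($n=2$, a direct attack) and of the support-independent condition $B\in\Delta$ is accurate. What your approach buys is self-containedness and, arguably, robustness: the paper's cited chain needs $\Delta$ to be closed in each individual BAF in order to invoke Lemma~\ref{lemma:cs} and the second half of Proposition~\ref{prop:safe-closure}, yet safety alone does not imply closedness, so the one-line derivation is at best elliptical; your monotonicity argument sidesteps closedness entirely and is essentially the same mechanism the paper later uses for Lemma~\ref{lemma:s-accptability} (s-admissibility transfers from a BAF to any BAF with fewer supports), specialized to safety. What the paper's route buys, when it applies, is brevity and reuse of already-established preservation results.
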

\begin{proof}
This proposition is a consequence of Proposition~\ref{prop:cf}, Lemma~\ref{lemma:cs}, and Proposition~\ref{prop:safe-closure}.
\end{proof}


The concepts of d-admissibility and s-admissibility are based on different coherences, but the preservation results for them are similar, as the following proposition demonstrates.
\begin{proposition}\label{prop:dadmissibility}
The unanimity rule preserves either d-admissibility or s-admissibility.
\end{proposition}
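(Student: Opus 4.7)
The plan is to reduce this to the already-established preservation results for conflict-freeness (Proposition~\ref{prop:cf}) and safety (Proposition~\ref{prop:sf}), using the fact that the defense condition in the definitions of d-admissibility and s-admissibility is untouched by the aggregation procedure.

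The first observation I would emphasize is that in our model only the support-relation varies across agents: the set of arguments $\Arg$ and the attack relation $\attacks$ are fixed and shared by all individual BAFs as well as by the collective BAF $\langle \Arg, \attacks, F(\prof{\supports})\rangle$. Since defense, as stated in the paragraph preceding the definition of d-admissibility, is the classical Dung notion---$\Delta$ defends $B$ iff for every $A$ with $A\attacks B$ there is some $C\in\Delta$ with $C\attacks A$---defense depends only on $\attacks$, not on the support-relation. Consequently, whenever $\Delta$ defends all its elements in the BAF of some (equivalently, every) agent, it also defends all its elements in the collective BAF.

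With this in hand, both halves of the proposition become one-line applications of earlier results. Suppose $\Delta$ is d-admissible in $\langle \Arg, \attacks, \supports_i\rangle$ for every $i\in N$. Then $\Delta$ is conflict-free in every individual BAF, hence by Proposition~\ref{prop:cf} conflict-free in the BAF induced by the unanimity rule, and by the preceding observation $\Delta$ still defends all its elements there; so $\Delta$ is d-admissible in the outcome. The argument for s-admissibility is identical, but invokes Proposition~\ref{prop:sf} in place of Proposition~\ref{prop:cf}: safety in every individual BAF transfers to the collective BAF under the unanimity rule, and defense is again inherited for free.

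There is no real obstacle here; the only thing to be careful about is to make explicit that defense is a property of the attack-graph alone and therefore insensitive to how the support-relation is aggregated. Once that point is made, the proposition follows by combining Propositions~\ref{prop:cf} and~\ref{prop:sf} with the definitions of d-admissibility and s-admissibility.
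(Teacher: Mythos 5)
Your proposal is correct and follows essentially the same route as the paper: conflict-freeness in the collective BAF via Proposition~\ref{prop:cf}, together with the observation that defense depends only on the shared attack relation, gives d-admissibility, and the s-admissibility case (which the paper omits as analogous) is handled exactly as one would expect, by substituting the safety-preservation result for the conflict-freeness one. No gaps.
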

%
%

\subsection{Preservation results for properties of being an extension}

We are going to present preservation results for more demanding properties.
Before proceeding, we introduce some necessary terminology and a simple result, as well as a technique developed by Endriss and Grandi for the more general framework of graph aggregation~\cite{EndrissGrandiAIJ2017}. 
Let $\sup \in \supports$ be a support, let $N = \{1,\ldots,n\}$ be a finite set of individuals (or agents, we assume that there are two or more agents), and let $\prof {\supports}$ be a profile of support-relations.
Recall that $N_{\sup}^{\prof {\supports}}$ is the set of agents who accept $\sup$ under profile $\prof {\supports}$.
A \emph{winning coalition} $\mathcal W \subseteq N$ is a set of agents who can decide whether to accept or reject a given support $\sup$.
Given an aggregation rule $F$, if $F$ is \emph{neutral} and \emph{independent}, then $F$ can be fully determined by a single set $\mathcal W$ of winning coalitions, i.e., for every profile $\prof {\supports}$ and every support $\sup$ it is the case that $\sup \in F(\prof {\supports}) \Leftrightarrow N_{\sup}^{\prof {\supports}} \in \mathcal W$.

In our proofs, we will rely on the concept of \emph{ultrafilter} familiar from set theory~\cite{MonjardetSCW1983}. An \emph{ultrafilter} is a collection of subsets of $N$ satisfying \emph{closure under intersection}, \emph{maximality}, and $\emptyset \notin \mathcal W$. 

\begin{definition}
An \emph{ultrafilter} $\mathcal W$ on a set $N$ is a collection of subsets of $N$ satisfying the following conditions:

\begin{enumerate}[label=$(\arabic*)$]
\item $\emptyset \notin \mathcal W$

\item for any pair of sets $C_1, C_2 \subseteq N$, $C_1, C_2 \in \mathcal W$ implies $C_1 \cap C_ 2 \in \mathcal W$ (closure under intersection)

\item for any set $C$, one of $C$ and $N\backslash C$ is in $\mathcal W$ (maximality)
\end{enumerate}
\end{definition}

\noindent
We restate a simple result, which interprets a well-known fact of ultrafilter in our context:
\begin{quote}
\textit{Let $F$ be an independent and neutral aggregation rule and let $\mathcal W$ be the corresponding set of winning coalitions for supports, i.e., $\sup \in F(\prof {\supports}) \Leftrightarrow N_{\sup}^{\prof {\supports}} \in \mathcal W$ for all $\sup \in \supports$. Then, $F$ is dictatorial if and only if $\mathcal W$ is an ultrafilter.}
\end{quote}
\noindent
Besides the properties identified in Section~\ref{sec:model}, we introduce two meta-properties:
\begin{definition}
A property $P$ is called \textbf{non-simple} if there exists a set $\Sup\subseteq\Arg\times\Arg$ of supports and three individual supports $\sup_1,\sup_2,\sup_3\in\Arg\times\Arg\setminus\Sup$ such that $\langle\Arg,\attacks,\Sup\cup S\rangle$ with $S\subseteq\{\sup_1,\sup_2,\sup_3\}$ satisfies $P$ if and only if $S\not=\{\sup_1,\sup_2, \sup_3\}$. 
\end{definition}

\begin{definition}
A property $P$ is called \textbf{disjunctive} if there exists a set $\Sup\subseteq\Arg\times\Arg$ of supports and two individual supports $\sup_1,\sup_2\in\Arg\times\Arg\setminus\Sup$ such that $\langle\Arg,\attacks,\Sup\cup S\rangle$ with $S\subseteq\{\sup_1,\sup_2\}$ satisfies $P$ if and only if $S\not=\emptyset$. 
\end{definition}

Non-simplicity requires that, in the context of $\Sup$, accepting any proper subset of $\{sup_1, sup_2, sup_3\}$ is possible, while accepting  $\{sup_1, sup_2, sup_3\}$ is not.
Disjunctiveness requires that, in the context of $\Sup$, we should accept at least one of $sup_1$ and $sup_2$.
The term of \emph{simplicity} was introduced by Nehring and Puppe~\cite{NehringPuppeJET2007} as the \emph{median property}; the term of \emph{disjunctiveness} was introduced by Endriss and Grandi~\cite{EndrissGrandiAIJ2017} as a graph meta-property.
It is worth noting that a meta-property is a class of properties, a property satisfies or does not satisfy a specific meta-property.
Even though the definitions of meta-properties are not complicated, deciding whether a given property belongs to a meta-property is still not straightforward.

Meta-properties have connections with properties of BAFs and aggregation rules: on the one hand, meta-properties outline high-level features of properties of BAFs, with which we are able to systematically study the preservation of semantic properties of BAFs with simple proofs; on the other hand, as we will see in the following lemmas, meta-properties allow us to generalize specific result for specific properties by instantiating the general results. To be more specific, if an aggregation rule preserves a property that belongs to a meta-property, then it is a dictatorship. 

\begin{lemma}\label{lemma:dictator}
Let $P$ be a property that is non-simple and disjunctive.
Then, for $\card{\Arg\,}\geq 3$, any unanimous, grounded, neutral, and independent aggregation rule~$F$ that preserves $P$ must be a dictatorship.
\end{lemma}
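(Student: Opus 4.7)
The plan is to follow the standard ultrafilter route for Arrovian impossibility theorems in judgment and graph aggregation. Since $F$ is neutral and independent, the excerpt tells us that it is completely described by a family $\mathcal{W}\subseteq 2^N$ of winning coalitions via $\sup\in F(\prof{\supports})\Leftrightarrow N_{\sup}^{\prof{\supports}}\in\mathcal{W}$, and that $F$ is a dictatorship if and only if $\mathcal{W}$ is an ultrafilter. Hence it suffices to verify the three ultrafilter conditions $\emptyset\notin\mathcal{W}$, maximality, and closure under intersection. Groundedness will immediately yield the first, disjunctiveness will yield maximality, and non-simplicity (combined with maximality) will yield closure under intersection.

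Non-triviality is immediate: groundedness forces $F(\prof{\supports})\subseteq\supports_1\cup\cdots\cup\supports_n$, so no support accepted by zero agents can enter the output, and therefore $\emptyset\notin\mathcal{W}$. For maximality, fix an arbitrary $C\subseteq N$ and instantiate the disjunctiveness witness $\Sup,\sup_1,\sup_2$. Consider the profile in which every agent in $C$ submits $\Sup\cup\{\sup_1\}$ and every agent outside $C$ submits $\Sup\cup\{\sup_2\}$. Each individual BAF satisfies $P$ by disjunctiveness. Unanimity gives $\Sup\subseteq F(\prof{\supports})$, groundedness gives $F(\prof{\supports})\subseteq\Sup\cup\{\sup_1,\sup_2\}$, so the output has the form $\Sup\cup S'$ for some $S'\subseteq\{\sup_1,\sup_2\}$. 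Preservation of $P$ then requires $S'\neq\emptyset$, which means $C=N_{\sup_1}^{\prof{\supports}}\in\mathcal{W}$ or $N\setminus C=N_{\sup_2}^{\prof{\supports}}\in\mathcal{W}$.

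For closure under intersection, take $C_1,C_2\in\mathcal{W}$ and suppose for contradiction that $C_1\cap C_2\notin\mathcal{W}$. By the maximality just proved, $C_3:=N\setminus(C_1\cap C_2)\in\mathcal{W}$, and crucially $C_1\cap C_2\cap C_3=\emptyset$. Instantiate the non-simplicity witness $\Sup,\sup_1,\sup_2,\sup_3$ and construct the profile in which agent $i$ submits $\Sup\cup\{\sup_k:i\in C_k\}$. Because no agent lies in all three of $C_1,C_2,C_3$, no agent holds all three of the new supports, and non-simplicity guarantees that each individual BAF satisfies $P$. Yet $N_{\sup_k}^{\prof{\supports}}=C_k\in\mathcal{W}$ for $k\in\{1,2,3\}$, which, combined with unanimity for the $\Sup$-part and groundedness for the rest, forces $F(\prof{\supports})=\Sup\cup\{\sup_1,\sup_2,\sup_3\}$, contradicting $P$ by the defining clause of non-simplicity.

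The main obstacle, and the reason both meta-properties are needed, is the coordinated choice of the three coalitions in the closure step: $C_3$ must simultaneously be a winning coalition (so that $\sup_3$ surfaces in the output) and be disjoint from $C_1\cap C_2$ (so that no individual agent is forced to endorse all three new supports and thereby violate $P$). The assignment $C_3:=N\setminus(C_1\cap C_2)$ threads this needle precisely because maximality, already extracted from disjunctiveness, guarantees that this complement is winning. The hypothesis $\card{\Arg}\geq 3$ is used only implicitly, via the meta-property definitions, which presuppose enough arguments to host the context $\Sup$ together with two or three fresh supports.
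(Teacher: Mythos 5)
Your proof is correct and follows essentially the same route as the paper's: reduce to showing the winning-coalition family is an ultrafilter, with groundedness giving $\emptyset\notin\mathcal W$, the disjunctiveness witness giving maximality, and the non-simplicity witness together with maximality giving closure under intersection. Your write-up is in fact slightly more careful than the paper's, since you make explicit that each agent also submits the context $\Sup$ and that unanimity and groundedness pin the output down to the form $\Sup\cup S'$ required by the meta-property definitions.
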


If a property we are interested in is non-simple and disjunctive, then we can apply Lemma~\ref{lemma:dictator} to obtain an axiomatic result for it.
\begin{theorem}\label{thm:s-preferred-dictator}
For $\card{\Arg\,}\geq 5$, any unanimous, grounded, neutral, and independent aggregation rule~$F$ that preserves either d-preferred, s-preferred, or c-preferred extensions must be a dictatorship.
\end{theorem}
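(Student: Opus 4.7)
The plan is to reduce the theorem to Lemma~\ref{lemma:dictator} by showing that, for each $\sigma\in\{d,s,c\}$, the property ``$\Delta$ is a $\sigma$-preferred extension'' is both non-simple and disjunctive for a suitable $\Delta\subseteq\Arg$. Since $F$ preserves this property for \emph{every} $\Delta$, exhibiting a single witness $\Delta$ for each $\sigma$ suffices; Lemma~\ref{lemma:dictator} then forces dictatorship.

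For disjunctiveness I would take $\Arg=\{A,B,C,D,E\}$, $\Delta=\{A\}$, base $\Sup=\emptyset$, attacks $A\attacks B$, $A\attacks C$, $A\attacks D$, and candidate supports $\sup_1=B\supports E$, $\sup_2=C\supports E$. In the base, the set $\{A,E\}$ is itself $\sigma$-admissible (conflict-free, self-defending, safe, and closed), so $\Delta$ is not $\sigma$-preferred. Adding either $\sup_1$ or $\sup_2$ generates a secondary attack from $A$ to $E$ (namely $A\attacks B\supports E$ or $A\attacks C\supports E$), destroying conflict-freeness of $\{A,E\}$ and, by Proposition~\ref{prop:safe-closure}, also its safety; its c-admissibility likewise fails. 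The remaining supersets of $\Delta$ are still excluded by the direct attacks from $A$. Thus $\Delta$ is $\sigma$-preferred exactly when at least one of $\sup_1,\sup_2$ is present.

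For non-simplicity I would use $\Arg=\{A,B,C,D,E\}$, $\Delta=\{A\}$, attacks $A\attacks B$, $D\attacks C$, $E\attacks D$, $C\attacks E$, and a three-link support chain $\sup_1=B\supports C$, $\sup_2=C\supports D$, $\sup_3=D\supports A$. The cycle $C\to E\to D\to C$ ensures that every proper superset of $\Delta$ fails $\sigma$-admissibility in the base---either through the direct conflict $A\attacks B$ or through an undefendable attack from the cycle---and since supports only tighten admissibility this maximality is inherited by every sub-profile. When all three supports are present, the chain produces a secondary self-attack $A\attacks B\supports C\supports D\supports A$ that destroys conflict-freeness of $\Delta$. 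A case analysis over the seven remaining sub-profiles shows that no sub-chain of supports reaches $A$ (the only direct attack leaving $A$ ends at $B$, and no shorter chain from $B$ runs back to $A$), so $\Delta$ stays conflict-free, safe, and closed, and hence $\sigma$-preferred.

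The main obstacle is making one construction serve s-preferred and c-preferred simultaneously with d-preferred. Safety fails as soon as $\Delta$ both attacks and supports a common argument, and closedness requires $\Delta$ to contain every argument it supports. Taking $\Delta$ to be the singleton $\{A\}$ and placing the support chain entirely outside $\Delta$ (so that $A$ neither supports anything nor appears as the source of any $\sup_i$) handles both constraints uniformly. The remaining work is routine: a case analysis across the eight subsets of $\{\sup_1,\sup_2,\sup_3\}$, verifying that no shortcut produces a $\Delta$-internal set-attack with fewer than three supports.
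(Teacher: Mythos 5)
Your proposal is correct and follows the paper's proof in essentially the same way: it reduces the theorem to Lemma~\ref{lemma:dictator} by exhibiting one scenario witnessing disjunctiveness and one witnessing non-simplicity of the property of being a d-/s-/c-preferred extension, and your concrete gadgets (the secondary attacks $A\attacks B\supports E$ and $A\attacks C\supports E$ for disjunctiveness, and the secondary self-attack $A\attacks B\supports C\supports D\supports A$ guarded by the cycle $C\attacks E\attacks D\attacks C$ for non-simplicity) verify correctly under the paper's definitions, differing from the paper's figures only in the choice of witness BAFs. The sole cosmetic point is that you fix $\card{\Arg}=5$ with $\Delta=\{A\}$, whereas for $\card{\Arg}>5$ the candidate set should be taken as $\Arg\setminus\{B,C,D,E\}$ so that any additional (isolated) arguments are absorbed into the extension, exactly as the paper does with its complement sets.
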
 

For the scenarios when $|\Arg| = 4$, or even $|\Arg| = 3$, we are not able to verify whether the theorem can still apply, we conjecture that the bound on the cardinality of $\Arg$ is sharp and we believe that the theorem has covered all cases of practical interest.
By comparison, we recall that for the property of being a preferred extension of Dung's argumentation framework, Chen and Endriss have shown that only dictatorships preserve it~\cite{ChenEndrissAIJ2019}. They have made the assumption that every agent is equipped with a different set of attack relations while they hold the same set of arguments.

Note that Theorem~\ref{thm:s-preferred-dictator} is an impossibility result that indicates the preservation of specific properties is impossible, unless the aggregation rule under consideration is dictatorial.
They relate to generalisations of Arrow's Impossibility Theorem~\cite{Arrow1963} to graph aggregation and attack-relation aggregation. One of the features of the aggregation rules we used in this section is that they accept the axiom of independence. Even though this axiom is attractive in some sense, to escape the impossibilities, a prime direction is to relax it. For example, we can consider distance-based rules and investigate whether we are able to obtain some positive results.

Theorem~\ref{thm:s-preferred-dictator} has assumed that the interpretation of support is deductive support. 
Even though it is enough for our purposes, namely it is enough to show that only dictatorships preserve d-preferred (s-preferred, c-preferred) extensions, we are still interested in what happens when the interpretation is restricted to necessary support. The bad news is, we still cannot overcome impossibility results. 
\begin{theorem}\label{thm:n-preferred-dictator}
If the interpretation of support is necessary support, for $\card{\Arg\,}\geq 5$, any unanimous, grounded, neutral, and independent aggregation rule~$F$ that preserves either d-preferred, s-preferred, or c-preferred extensions must be a dictatorship.
\end{theorem}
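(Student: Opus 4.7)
The plan is to invoke Lemma~\ref{lemma:dictator} exactly as in the proof of Theorem~\ref{thm:s-preferred-dictator}: it suffices to show that, under the necessary-support reading, each of the three properties (being a d-preferred, s-preferred, or c-preferred extension) is both non-simple and disjunctive on some set $\Arg$ with $\card{\Arg}\geq 5$. Once this is established, any unanimous, grounded, neutral, and independent aggregation rule preserving one of these properties is forced, via the ultrafilter characterisation recalled just before Lemma~\ref{lemma:dictator}, to be a dictatorship.

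To adapt the deductive-support witnesses to the necessary-support setting, I would first note the only substantive difference: under necessary support the relevant propagation of conflict is via \emph{secondary} attacks $A_1 \attacks A_2 \supports \cdots \supports A_n$ rather than supported attacks. Hence every gadget used in the deductive proof of Theorem~\ref{thm:s-preferred-dictator} must be ``reoriented'': where the old construction routed a conflict forward along a support chain into a terminal attack, the new construction should route a conflict backward by starting with an attack and then propagating it along supports. Because the essential constraint $\attacks\cap\supports=\emptyset$ is preserved and because $\CL{\cdot}$ is defined purely in terms of the support relation, closures and conflict-freeness move in a structurally analogous way, so the combinatorial skeleton of the argument survives.

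Concretely, for non-simplicity I would fix $\Arg=\{A,B,C,D,E\}$, take $\Sup$ to consist of a fixed ``backbone'' of supports, fix an attack $A\attacks B$ (together with whatever auxiliary attacks are needed so that defence is available for each of the candidate extensions), and choose the three variable supports $\sup_1,\sup_2,\sup_3$ so that each individually extends only a harmless branch of the support graph but that all three together create a secondary-attack path closing a cycle from a candidate extension onto itself. Then I verify that $\Sup\cup S$ admits the targeted set as a d-/s-/c-preferred extension for every proper subset $S\subsetneq\{\sup_1,\sup_2,\sup_3\}$ (conflict-freeness, defence, closure and maximality all hold) but fails when $S=\{\sup_1,\sup_2,\sup_3\}$ because the newly created secondary attack violates conflict-freeness or safety. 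For disjunctiveness the construction is simpler: choose $\Sup'$ and two supports $\sup_1',\sup_2'$ such that in the framework $\langle\Arg,\attacks,\Sup'\rangle$ the candidate set fails to defend one of its arguments, while adding either $\sup_1'$ or $\sup_2'$ creates a secondary attack that supplies the missing defence, and adding both does not destroy conflict-freeness, closedness, or maximality. The same backbone can be reused across d-, s-, and c-preferred by adjusting only whether the candidate extension is required to be closed or safe, exploiting Proposition~\ref{prop:safe-closure}.

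The hard part will be the simultaneous bookkeeping for all three semantics in a single $5$-argument framework: when one switches from conflict-freeness to safety to closure, the role of a single added support can flip between ``benign'' and ``fatal'', and one must avoid collateral damage such as accidentally enlarging an extension (breaking maximality) or creating an unintended secondary attack on an unrelated defender. I expect the cleanest route is to keep the attack relation very sparse (essentially one attack generating the secondary attack in question, plus defender attacks that sit outside all support chains), so that every verification reduces to a finite case analysis on which supports in $\{\sup_1,\sup_2,\sup_3\}$ are present; this is the same style of case check as in Theorem~\ref{thm:s-preferred-dictator}, only carried out through secondary rather than supported attacks.
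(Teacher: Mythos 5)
Your overall strategy coincides with the paper's: invoke Lemma~\ref{lemma:dictator} and exhibit, under the necessary-support reading, witnesses showing that being a d-, s-, or c-preferred extension is non-simple and disjunctive, obtained by ``reorienting'' the deductive gadgets so that secondary attacks (attack first, then a support chain) do the work. Your non-simplicity sketch is in the right spirit and essentially matches the paper's witness, which reverses the support chain: $\attacks=\{D\attacks E,\,E\attacks D,\,B\attacks B,\,C\attacks C\}$, $\Sup=\emptyset$, $\sup_1=(B\supports A)$, $\sup_2=(C\supports B)$, $\sup_3=(D\supports C)$, so that only with all three supports does $E$ secondary-attack $A$ and destroy the candidate extension $\Arg\setminus\{B,C,D\}$ (though you still owe the concrete verification for all proper subsets and all three semantics).

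The disjunctiveness part, however, rests on a mechanism that cannot work under this paper's definitions. You propose that in the base framework the candidate set \emph{fails to defend one of its arguments}, and that adding $\sup_1'$ or $\sup_2'$ creates a secondary attack that ``supplies the missing defence''. But defence here is the classical notion, defined purely in terms of the direct attack relation $\attacks$, which is fixed, shared by all agents, and untouched by the aggregation of supports; supported and secondary attacks enter only through conflict-freeness, safety, and set-attack, never through defence. Consequently no choice of supports can ever turn an undefended member into a defended one: a gadget whose base framework already violates the defence condition violates it for every $S\subseteq\{\sup_1',\sup_2'\}$, so the required biconditional ``$P$ holds iff $S\neq\emptyset$'' is unattainable by this route. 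The mechanism that does work — and is what the paper uses — is dual: arrange the gadget so that for $S=\emptyset$ the candidate set fails \emph{maximality}, because an unattacked argument $A$ outside it must belong to every d-/s-/c-preferred extension, while either support creates a secondary attack from a member of the candidate set onto $A$, excluding it and restoring maximality; concretely, $\attacks=\{B\attacks C,\,B\attacks D\}$, $\Sup=\emptyset$, $\sup_1=(C\supports A)$, $\sup_2=(D\supports A)$, with candidate set $\Arg\setminus\{A,C,D\}$, where any nonempty $S$ yields the secondary attack $B\attacks C\supports A$ or $B\attacks D\supports A$. You should replace your disjunctiveness construction by one of this exclusion type.
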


\subsubsection{Preservation result for stable extensions}

For stable extensions, by using the same techniques, we obtain a similar impossibility result.

\begin{theorem}\label{thm:stability-dictator}
For $\card{\Arg\,}\geq 5$, any unanimous, grounded, neutral, and independent aggregation rule~$F$ that preserves stable extensions must be a dictatorship.
\end{theorem}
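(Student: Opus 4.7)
The plan is to invoke Lemma~\ref{lemma:dictator}: it suffices to exhibit, for a suitable choice of $\attacks$ and some $\Delta \subseteq \Arg$ with $\card{\Arg} \geq 5$, that the property $P_\Delta$ = \emph{``$\Delta$ is a stable extension of $\langle \Arg, \attacks, \supports \rangle$''} is both non-simple and disjunctive. Since any $F$ preserving stable extensions in the sense of the theorem in particular preserves this single $P_\Delta$ (by specialising to our fixed $\Delta$), the lemma will then force $F$ to be a dictatorship. The remaining work is thus purely to build the two witnessing configurations.

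The common scaffolding I would use is $\{A, B, C, D, E\} \subseteq \Arg$, $\Delta = \{A, E\}$, and $\attacks = \{A \attacks B,\, A \attacks D,\, D \attacks E\}$; any further arguments, when $\card{\Arg} > 5$, can be made directly attacked by $A$ so that they do not disturb $\Delta$'s stability. Disjunctivity is then witnessed by $\Sup = \emptyset$ together with the two candidates $B \supports C$ and $D \supports C$: with neither candidate, $C$ is not set-attacked so $\Delta$ fails stability; with either, the secondary attack $A \attacks B \supports C$ (respectively $A \attacks D \supports C$) repairs coverage, and no support-chain between $A$ and $E$ ever arises. Non-simplicity is witnessed by $\Sup = \{B \supports C\}$ together with the three candidates $A \supports C$, $C \supports B$, $B \supports D$: the baseline already makes $\Delta$ stable (via the secondary attack $A \attacks B \supports C$); each proper subset of the three candidates leaves $A$ without a support-path reaching $D$, so no supported attack on $E$ forms; but the full triple yields the length-$5$ supported attack $A \supports C \supports B \supports D \attacks E$ from $A \in \Delta$ onto $E \in \Delta$, destroying conflict-freeness.

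The main obstacle will be the careful case analysis across the seven nontrivial support-sets appearing in the two constructions: in each case one must verify both that no supported or secondary attack develops between $A$ and $E$ (so that the ``stable'' cases really remain conflict-free) and that $\Delta$ keeps set-attacking every element of $\Arg \setminus \Delta$. The essential constraint $\attacks \cap \supports = \emptyset$ also shapes the choice of $\attacks$: because $A$ must supportively reach $C$ in the non-simplicity chain, we cannot include $A \attacks C$ among the base attacks, and we must route $A$'s coverage of $C$ through a secondary attack instead. I expect the subtle point to be showing, inside the non-simplicity construction, that dropping any single one of the three candidates genuinely breaks every $A$-to-$D$ support-path, so that no unintended supported attack on $E$ sneaks in through the two-step cycle $B \supports C \supports B$ that appears the moment $C \supports B$ is added.
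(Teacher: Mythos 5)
Your proposal is correct and follows essentially the same route as the paper: reduce to Lemma~\ref{lemma:dictator} by exhibiting witnesses showing that ``being a stable extension'' is non-simple and disjunctive, and I checked that your configurations work (with $\Delta=\{A,E\}$ and $\attacks=\{A\attacks B, A\attacks D, D\attacks E\}$, the only support-path from $A$ to the sole attacker $D$ of $E$ is $A\supports C\supports B\supports D$, so only the full triple breaks conflict-freeness, while $C$ is always covered by the secondary attack $A\attacks B\supports C$; in the disjunctive scenario either candidate support yields the needed secondary attack on $C$ and no conflict inside $\{A,E\}$ ever arises). The only difference is in the concrete witnesses: the paper uses two separate configurations (different attack relations and target sets) for non-simplicity and disjunctiveness, whereas you keep a single attack relation, a single target set, and a nonempty $\Sup$ for the non-simplicity part, which is a slightly tighter instantiation of the same lemma.
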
 
By comparison, we recall that the nomination rule preserves stable extensions of Dung's argumentation frameworks~\cite{ChenEndrissAIJ2019}.

\subsection{Preservation of argument acceptability}
Now, we move to study the preservation of acceptability of arguments.
Before proceeding further, it is thus important to keep in mind that our model has assumed that each agent $i \in N$ reports a set of supports $\supports_i$ on the same set of arguments and attack relations.
Given two BAFs $\supports_1$ and $\supports_2$, if $\supports_1 \supseteq \supports_2$, namely the supports of $\supports_1$ is a superset of $\supports_2$, then a d-admissible (s-admissible, c-admissible, respectively) set of $\supports_1$ remains d-admissible (s-admissible, c-admissible, respectively) in $\supports_2$.

\begin{lemma}\label{lemma:d-accptability}
Given two BAFs $\supports_1$ and $\supports_2$, if $\supports_1 \supseteq \supports_2$, then every d-admissible set of arguments of $\supports_1$ is a d-admissible set of $\supports_2$.
\end{lemma}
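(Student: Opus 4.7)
My plan is to unfold the definition of d-admissibility and verify its two conjuncts, conflict-freeness and defense, separately for the two BAFs. Let $\Delta\subseteq\Arg$ be d-admissible in $\langle\Arg,\attacks,\supports_1\rangle$, so $\Delta$ is conflict-free and every $B\in\Delta$ is defended by $\Delta$. I want to show that both conditions transfer to $\langle\Arg,\attacks,\supports_2\rangle$, using only the hypothesis $\supports_1\supseteq\supports_2$ together with the standing assumption of the model that the attack set $\attacks$ is fixed across the BAFs under consideration.

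The defense condition is the easy half. Defense is defined purely in terms of $\attacks$: $\Delta$ defends $B$ whenever for every $A$ with $A\attacks B$ there is some $C\in\Delta$ with $C\attacks A$. Since $\attacks$ is identical in the two BAFs, any defender available in $\supports_1$ is a defender in $\supports_2$. So the defense clause is inherited for free, without using the support-inclusion assumption at all.

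The substantive step is conflict-freeness, and here the key observation is a monotonicity property of set-attacks: shrinking the support relation can only destroy, never create, supported and secondary attacks. Concretely, any supported attack witnessed in $\supports_2$ by a sequence $(A_1,\ldots,A_n)$ consists of $\supports_2$-links $A_i\supports_2 A_{i+1}$ for $i\leq n-2$ together with a single $\attacks$-link $A_{n-1}\attacks A_n$; since each $\supports_2$-link is also a $\supports_1$-link and $\attacks$ is common, the same sequence is a supported attack in $\supports_1$. The same argument applies verbatim to secondary attacks, whose definition interleaves one $\attacks$-edge with a (possibly empty when $n=2$) chain of $\supports$-edges. Therefore any witness that $\{A\}$ set-attacks $B$ in $\supports_2$ is also such a witness in $\supports_1$; contrapositively, conflict-freeness of $\Delta$ in $\supports_1$ implies conflict-freeness of $\Delta$ in $\supports_2$.

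I do not anticipate a genuine obstacle: the lemma is essentially this one-line monotonicity observation applied separately to the two ingredients of d-admissibility. The only point requiring a moment of care is making sure the case $n=2$ of secondary attacks is covered, but since that case reduces to a bare $\attacks$-edge and $\attacks$ is fixed, it is handled automatically. Combining the two parts yields that $\Delta$ is conflict-free and defends its elements in $\langle\Arg,\attacks,\supports_2\rangle$, i.e., d-admissible in $\supports_2$, as required.
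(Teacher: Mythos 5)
Your proposal is correct and follows essentially the same route as the paper's own proof: defense transfers trivially because $\attacks$ is fixed, and conflict-freeness transfers because every supported or secondary attack witnessed in $\supports_2$ uses only $\supports_2$-edges (hence $\supports_1$-edges) and common $\attacks$-edges, so it is also a witness in $\supports_1$. Your explicit remark about the $n=2$ case of secondary attacks corresponds to the paper's separate treatment of direct attacks, so nothing is missing.
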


\begin{lemma}\label{lemma:s-accptability}
Given two BAFs $\supports_1$ and $\supports_2$, if $\supports_1 \supseteq \supports_2$, then every s-admissible set of arguments of $\supports_1$ is a s-admissible set of $\supports_2$.
\end{lemma}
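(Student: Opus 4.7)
The plan is to decompose s-admissibility into its two defining conditions---safety and defense---and to verify each one separately under the passage from $\supports_1$ to the smaller support-relation $\supports_2$. The crucial observation is a monotonicity fact about the auxiliary notions: any sequence of supports, supported attack, or secondary attack with respect to a support-relation $\supports$ remains such a sequence with respect to every $\supports'\supseteq\supports$, since these sequences are built entirely from edges in $\supports$ (together with, for attacks, one edge of the fixed attack-relation $\attacks$). Consequently, if $\Delta$ set-attacks (resp.\ set-supports) some $B$ under $\supports_2$, then $\Delta$ set-attacks (resp.\ set-supports) $B$ under $\supports_1$ as well.

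With this in hand, I would first handle safety. Suppose, towards a contradiction, that $\Delta$ is not safe in $\langle\Arg,\attacks,\supports_2\rangle$; then there is some $B\in\Arg$ such that $\Delta$ set-attacks $B$ under $\supports_2$ and either $\Delta$ set-supports $B$ under $\supports_2$ or $B\in\Delta$. The monotonicity observation lifts the two set-theoretic conditions from $\supports_2$ to $\supports_1$, while $B\in\Delta$ transfers trivially. This contradicts the safety of $\Delta$ in $\supports_1$, so $\Delta$ is safe in $\supports_2$. For defense, the definition recalled in the paper refers only to the direct attack relation $\attacks$, which is shared by the two BAFs; hence any witness $C\in\Delta$ that defends an element of $\Delta$ against an attacker in $\langle\Arg,\attacks,\supports_1\rangle$ continues to do so in $\langle\Arg,\attacks,\supports_2\rangle$. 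Combining safety and defense yields s-admissibility of $\Delta$ in $\supports_2$.

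The main obstacle is conceptual rather than technical: one has to pin down the direction of monotonicity, namely that enlarging the support-relation can only create new set-attacks and new set-supports but never destroy existing ones, so that the hypotheses in $\supports_1$ are genuinely stronger than the would-be counterexamples in $\supports_2$. Once that is internalised, the argument reduces to routine bookkeeping and needs neither Proposition~\ref{prop:safe-closure} nor any appeal to closedness, mirroring the shape of the proof of Lemma~\ref{lemma:d-accptability}.
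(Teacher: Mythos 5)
Your proposal is correct and follows essentially the same route as the paper's proof: lift a would-be violation of safety from $\supports_2$ to $\supports_1$ via the monotonicity of set-attacks and set-supports under enlargement of the support-relation, and note that defense depends only on the shared attack relation $\attacks$. The only cosmetic differences are that you state the monotonicity observation explicitly rather than deferring to the construction in Lemma~\ref{lemma:d-accptability}, and you omit the separate conflict-freeness check, which is harmless since s-admissibility is defined via safety and defense (and safety implies conflict-freeness by Proposition~\ref{prop:safe-closure}).
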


\begin{lemma}\label{lemma:c-accptability}
Given two BAFs $\supports_1$ and $\supports_2$, if $\supports_1 \supseteq \supports_2$, then every c-admissible set of arguments of $\supports_1$ is a c-admissible set of $\supports_2$.
\end{lemma}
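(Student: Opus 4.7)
The plan is to verify directly that each of the three defining conditions of c-admissibility---conflict-freeness, self-defense, and closedness---transfers from $\supports_1$ to $\supports_2$ when $\supports_1\supseteq\supports_2$ (with the set of arguments $\Arg$ and the attack relation $\attacks$ held fixed across the two BAFs). Let $\Delta\subseteq\Arg$ be c-admissible in $\langle\Arg,\attacks,\supports_1\rangle$.

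First I would treat conflict-freeness. Recall that $\{A\}$ set-attacks $B$ iff there is a supported attack or a secondary attack from $A$ to $B$, and both of these notions require a chain of arguments linked by the support relation. Any such chain in $\supports_2$ is automatically a chain in $\supports_1$ (since $\supports_2\subseteq\supports_1$), while the direct attack at the end or beginning of the chain is the same in both BAFs. Hence every set-attack present in $\supports_2$ is already present in $\supports_1$, so conflict-freeness in $\supports_1$ implies conflict-freeness in $\supports_2$.

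Next I would handle self-defense. Looking back at the definition given before the notion of d-admissibility, defense is phrased purely in terms of the attack relation $\attacks$: $\Delta$ defends $B$ iff for every $A$ with $A\attacks B$ there exists $C\in\Delta$ with $C\attacks A$. Since $\attacks$ is the same in both BAFs, if $\Delta$ defends all its elements in $\supports_1$ it also defends them in $\supports_2$. This step is essentially immediate.

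Finally I would address closedness. The closure $\CL{\Delta}$ is determined by sequences of supports; with fewer arrows in $\supports_2$, such sequences are a subset of those in $\supports_1$, so $\CL_{\supports_2}(\Delta)\subseteq \CL_{\supports_1}(\Delta)$. Combining this with $\Delta=\CL_{\supports_1}(\Delta)$ (closedness in $\supports_1$) and the trivial inclusion $\Delta\subseteq\CL_{\supports_2}(\Delta)$, we conclude $\Delta=\CL_{\supports_2}(\Delta)$. The three conditions together give c-admissibility in $\supports_2$. The only minor subtlety, and the step most likely to cause confusion, is to keep straight the direction of the inclusions: adding supports can only \emph{enlarge} the closure and \emph{increase} the number of set-attacks, so removing supports (going from $\supports_1$ to the smaller $\supports_2$) both preserves conflict-freeness and cannot break closedness, and this is exactly what makes the lemma go through in the direction stated.
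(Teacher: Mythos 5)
Your proposal is correct and follows essentially the same route as the paper's proof: decompose c-admissibility into conflict-freeness, self-defense, and closedness, note that set-attacks in $\supports_2$ are already set-attacks in $\supports_1$ because support chains can only be lost (not gained) when passing to the smaller relation while $\attacks$ is unchanged, observe that defense depends only on $\attacks$, and conclude closedness from the fact that removing supports can only shrink the closure of $\Delta$. The paper phrases the conflict-freeness and defense steps by appeal to its Lemma on d-admissibility and proves closedness by contradiction on a single support edge, whereas you argue directly via monotonicity of the closure, but these are the same ideas.
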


\begin{fact}\label{fact:acceptability}
Given a BAF $\supports$, if $\Delta\subseteq\Arg$ is a d-preferred (s-preferred, c-preferred, respectively) extension of $\supports$, then $\Delta$ is a d-admissible (s-admissible, c-admissible, respectively) set of arguments of $\supports$.
\end{fact}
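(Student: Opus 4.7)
The plan is to observe that this fact is an immediate unpacking of the definitions stated earlier in Section~\ref{sec:baf}, and not a substantive claim requiring any real argument. Recall that a d-preferred extension of $\supports$ was defined as a set that is \emph{maximal} (with respect to set-inclusion) \emph{among all d-admissible sets}; analogous statements hold for s-preferred and c-preferred extensions. Hence I would simply remark that, by these definitions, any d-preferred (respectively, s-preferred, c-preferred) extension is in particular a member of the class over which the maximality is taken, i.e., a d-admissible (respectively, s-admissible, c-admissible) set.

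Concretely, I would write a single-sentence proof for each of the three cases. Fix a BAF $\supports$ and $\Delta\subseteq\Arg$. If $\Delta$ is a d-preferred extension, then by definition $\Delta$ is a maximal element of the collection of d-admissible subsets of $\Arg$, so $\Delta$ lies in that collection and is therefore d-admissible. The same observation, applied to the definitions of s-preferred and c-preferred extension, yields s-admissibility and c-admissibility respectively.

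There is no real obstacle here: the only thing one has to be careful about is that each of the three notions of ``preferred'' uses the corresponding notion of admissibility in its definition (d with d, s with s, c with c), so that the three implications do not get mixed up. Since the fact is purely definitional, I would present it as a brief remark rather than a detailed derivation, and indeed it is used later only to bridge Lemmas~\ref{lemma:d-accptability}--\ref{lemma:c-accptability} to statements about preferred extensions when analysing argument acceptability.
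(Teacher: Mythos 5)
Your proposal is correct and matches the paper's treatment: the paper states this as a \emph{Fact} without any proof precisely because, as you observe, a d-preferred (s-preferred, c-preferred) extension is by definition a maximal element of the collection of d-admissible (s-admissible, c-admissible) sets and hence belongs to that collection. Your care in pairing each notion of preferredness with its corresponding notion of admissibility is exactly the right (and only) point to check.
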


Thus, every d-admissible (s-admissible, c-admissible, respectively) set of arguments is included in a d-preferred (s-preferred, c-preferred, respectively) extension.
With this, we are ready to present the preservation results for argument acceptability under preferred semantics, including d-preferred semantics, s-preferred semantics, and c-preferred semantics.
Note that in the following we say that an argument under a d-preferred extension, we mean that such argument is a member of a d-preferred extension.

\begin{proposition}\label{proposition:dpf-acceptability}
The unanimity rule preserves the property of argument acceptability under d-preferred semantics.
\end{proposition}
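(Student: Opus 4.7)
The plan is to chain together Lemma~\ref{lemma:d-accptability}, Fact~\ref{fact:acceptability}, and the observation (already flagged in the text) that every d-admissible set is contained in some d-preferred extension, exploiting the fact that $F(\prof{\supports}) = \supports_1 \cap \cdots \cap \supports_n$ sits below every individual support-relation under set inclusion.

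First I would unpack the hypothesis. Suppose that, for every $i \in N$, the argument $A \in \Arg$ is acceptable under d-preferred semantics in $\langle \Arg, \attacks, \supports_i \rangle$; that is, there exists a d-preferred extension $\Delta_i$ of $\supports_i$ with $A \in \Delta_i$. By Fact~\ref{fact:acceptability}, each $\Delta_i$ is a d-admissible set of $\supports_i$. I only need to produce a single d-preferred extension of $F(\prof{\supports})$ containing $A$; it does not matter which index $i$ I use, so I will fix an arbitrary one, say $i = 1$.

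The crucial step is to transport d-admissibility from $\supports_1$ down to the intersection. Since the unanimity rule gives $F(\prof{\supports}) = \bigcap_{i=1}^{n} \supports_i$, we have $\supports_1 \supseteq F(\prof{\supports})$. Applying Lemma~\ref{lemma:d-accptability} with $\supports_1$ in place of $\supports_1$ and $F(\prof{\supports})$ in place of $\supports_2$, I conclude that $\Delta_1$ is a d-admissible set of $F(\prof{\supports})$. Intuitively, shrinking the support-relation can only shrink the sets of supported attacks and secondary attacks, so conflict-freeness is inherited; and since the definition of defense depends only on $\attacks$, which is fixed across the profile, self-defense transfers unchanged.

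Finally, I invoke the standard fact, noted in the paragraph preceding the proposition, that every d-admissible set is included in some d-preferred extension (this is immediate from the definition of d-preferred as a set-inclusion maximal d-admissible set, together with a Zorn-style argument on the chain of d-admissible supersets of $\Delta_1$ in the finite framework $\langle \Arg, \attacks, F(\prof{\supports})\rangle$). Thus there exists a d-preferred extension $\Delta^\star$ of $F(\prof{\supports})$ with $\Delta_1 \subseteq \Delta^\star$, and hence $A \in \Delta^\star$, establishing that $A$ is acceptable under d-preferred semantics in $F(\prof{\supports})$. The main obstacle is really just to verify that Lemma~\ref{lemma:d-accptability} applies cleanly to the pair $(\supports_1, F(\prof{\supports}))$; once that is in place, the argument is a short transit through the already-established lemma and fact.
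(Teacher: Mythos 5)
Your proposal is correct and follows essentially the same route as the paper's own proof: fix one agent's d-preferred extension containing $A$, observe it is d-admissible, transfer d-admissibility to $F(\prof{\supports})$ via Lemma~\ref{lemma:d-accptability} using $\supports_i \supseteq F(\prof{\supports})$, and then enlarge to a d-preferred extension of $F(\prof{\supports})$ containing $A$. No gaps; the argument matches the paper step for step.
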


\begin{proposition}\label{proposition:spf-acceptability}
The unanimity rule preserves the property of argument acceptability under either s-preferred or c-preferred semantics.
\end{proposition}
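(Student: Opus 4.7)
The plan is to chain together the monotonicity lemma for the relevant admissibility notion (Lemma~\ref{lemma:s-accptability} for s-preferred, Lemma~\ref{lemma:c-accptability} for c-preferred) with Fact~\ref{fact:acceptability}. The key observation is that the unanimity rule returns $F(\prof{\supports})=\supports_1\cap\cdots\cap\supports_n$, so $F(\prof{\supports})\subseteq\supports_i$ for every agent $i$; this is exactly the inclusion hypothesis required by the two lemmas.

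Concretely, I would argue as follows. Assume $A\in\Arg$ is acceptable under s-preferred semantics for every agent, i.e., for each $i\in N$ there exists an s-preferred extension $\Delta_i$ of $\langle\Arg,\attacks,\supports_i\rangle$ with $A\in\Delta_i$. Fix any one agent $i$ and the associated $\Delta_i$. By Fact~\ref{fact:acceptability}, $\Delta_i$ is s-admissible in $\langle\Arg,\attacks,\supports_i\rangle$. Applying Lemma~\ref{lemma:s-accptability} with $\supports_1:=\supports_i$ and $\supports_2:=F(\prof{\supports})$ then shows that $\Delta_i$ is also s-admissible in $\langle\Arg,\attacks,F(\prof{\supports})\rangle$.

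To finish, I would note that $\Arg$ is finite, so the family of s-admissible sets in the collective BAF that contain $\Delta_i$ is non-empty and finite, hence has a set-inclusion-maximal element, which by definition is an s-preferred extension of $\langle\Arg,\attacks,F(\prof{\supports})\rangle$. This extension contains $\Delta_i$ and therefore $A$, establishing that $A$ is acceptable under s-preferred semantics in the collective outcome. The case of c-preferred semantics is handled in exactly the same way, substituting Lemma~\ref{lemma:c-accptability} for Lemma~\ref{lemma:s-accptability} and c-admissibility for s-admissibility throughout; closedness is inherited via the same monotonicity argument packaged inside Lemma~\ref{lemma:c-accptability}.

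There is no real obstacle, since the heavy lifting has already been done in the monotonicity lemmas. The only subtlety worth flagging in the write-up is that we do not need a single common extension across agents: it is enough to pick one agent's extension $\Delta_i$ as a witness, because monotonicity transports its s-admissibility (respectively, c-admissibility) into the collective BAF, where finiteness then guarantees an enclosing preferred extension.
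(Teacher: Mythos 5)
Your proposal is correct and follows essentially the same route as the paper: pick one agent's s-preferred (c-preferred) extension containing the argument, note it is s-admissible (c-admissible) by Fact~\ref{fact:acceptability}, transport its admissibility to the collective BAF via Lemma~\ref{lemma:s-accptability} (Lemma~\ref{lemma:c-accptability}) using $F(\prof{\supports})\subseteq\supports_i$ for the unanimity rule, and then enclose it in a preferred extension of the collective outcome. Your explicit finiteness remark justifying the enclosing maximal admissible set is a minor added detail the paper leaves implicit, but the argument is the same.
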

The proof is similar to the proof for Theorem~\ref{proposition:dpf-acceptability}. The only difference is that every s-admissible (c-admissible) set of $\supports_i$ is a s-admissible set of $F(\prof \supports)$ is because of Lemma~\ref{lemma:s-accptability} (Lemma~\ref{lemma:c-accptability}).

\section{Related work}\label{sec:related}
Previous work on obtaining argumentative consensus among a group of agents are mainly focus on abstract argumentation frameworks~\cite{CosteMarquisEtAlAIJ2007,TohmeEtAlFoIKS2008,ChenEndrissAIJ2019}.
Among them, Chen and Endriss~\cite{ChenEndrissAIJ2019} study of the preservation of semantic properties during the aggregation of attack-relations of abstract argumentation frameworks. As a potential domain of application for the model they develop, they do not make explicit reference to bipolar argumentation frameworks.
In addition, similar to us, they have made use of meta-properties proposed by Endriss and Grandi for graph aggregation~\cite{EndrissGrandiAIJ2017}, which serve as technical devices to obtain preservation results for semantic properties.

The problem of aggregating bipolar opinions has received interests in the literature.
The idea of aggregating support-relations of bipolar argumentation frameworks has been outlined in a preliminary version of this paper~\cite{ChenAAMAS2020}.
Lauren \textit{et al}.~\cite{LaurenEtAlAAMAS2021} consider aggregating bipolar assumption-based argumentation frameworks under the assumption that agents propose the same set of arguments, but propose different sets of attacks and supports.
Their focus is quota rules and oligarchic rules. 
%
Kontarinis \textit{et al}.~\cite{KontarinisETAL2011} study designing mechanisms for ``regulating'' debates under the setting of each agent in the debate equipped with a bipolar argumentation framework.
We note that in their settings, agents report the same set of arguments, but with possibly different attack- and support-relations. 

\section{Conclusion}\label{sec:conclusion}

In this paper, we have studied the aggregation of agents' view in the context of bipolar argumentation.
To be more specific, we have explored the problem of aggregating support-relations of bipolar argumentation frameworks by making use of the methodology of social choice theory.
To achieve this, we have designed a model, in which agents are equipped with a set of arguments and a set of attacks, but with possibly different support-relations.
We have shown which semantic properties of BAFs can be preserved by aggregation rules.
We have proposed two BAF meta-properties, namely the property of ``non-simplicity'' and ``disjunctiveness'', both of which are high level features of BAFs.
We show that such meta-properties can be used to obtain impossibility results, namely for quickly proving what kind of aggregation rules (or no desirable aggregation rule) is collectively rational with respect to BAF properties.

For future work, it is worth having an investigation of further meta-properties.
We point out that Lemma~\ref{lemma:dictator} is a variant of Theorem 16 in graph aggregation~\cite{EndrissGrandiAIJ2017}. This indicates that there is space for other variants of impossibility results for different assumptions.
The preservation of some desirable properties (for example, the property of being a d-preferred extension) during the aggregation of support-relations is difficult. 
Thus, it is worth studying whether such properties can be preserved in different settings. 
For instance, agents might be equipped with the same set of arguments, but with possibly different attack- and support-relations, and we aggregate attack- and support-relations by making use of different quota rules. 
Finally, recall that there are at least three interpretations of support, in this paper, we focus on the deductive support and necessary support. More interpretations of support, for example, evidential support, should be investigated. 

\paragraph{Acknowledgements.}
I would like to thank Ulle Endriss for his generous suggestions on an earlier version and his guidance at the beginning of this work. I also thank three anonymous reviewers for their constructive feedback. This work was supported by the China Postdoctoral Science Foundation (grant no. 2019M663352).

\bibliographystyle{eptcs}
\bibliography{argumentation}

\appendix
\section*{Appendix: Remaining Proofs}

\noindent
In this appendix we present the proofs omitted from the body of the paper. 

\subsection*{Proof of Proposition~\ref{prop:es}}
Let $\prof {\supports} = (\supports_1, \ldots, \supports_n)$ be a profile of BAFs, in which $\supports_i$ satisfies the essential constraint for all $i \in N$.
Let $F$ be an aggregation rule that is grounded.
For the sake of contradiction, we suppose that the essential constraint is violated in $F(\prof {\supports})$.
Without loss of generality, we suppose that both $A \attacks B$ and $A \supports B$ get accepted in $F(\prof {\supports})$.
From the assumption, we know that every agent agrees $A \attacks B$.
In the meantime, at least one agent accepts $A \supports B$ under grounded aggregation rules, which cannot be the case.
Thus, we have the proposition.
\myendofproof

\subsection*{Proof of Lemma~\ref{lemma:cs}}
We again let $\prof {\supports} = (\supports_1, \ldots, \supports_n)$ be a profile of BAFs, let $\Delta \subseteq \Arg$ be the set of arguments under consideration, and let $F$ be an aggregation rule that is grounded.
For the sake of contradiction, we suppose that $\Delta$ is closed in $\supports_i$ for all $i \in N$ and not closed in $F(\prof {\supports})$, i.e., there is an argument $A \in \Delta$ and an argument $B \in \Arg \backslash \Delta$ such that $A \supports B$ in $F(\prof {\supports})$.
As rules under considering are grounded, there is at least one agent $\supports_i$ for which $A \supports_i B$ is the case. This will lead to the situation that $\Delta$ not being closed in $\supports_i$, contradicting our assumption.
\myendofproof

\subsection*{Proof of Proposition~\ref{prop:cf}}
Recall that the unanimity rule is the quota rule $F$ with the quota of $n$.
Let $\prof {\supports} = (\supports_1, \ldots, \supports_n)$ be a profile of BAFs.
Let $\Delta \subseteq \Arg$ be the set of arguments under consideration.
For the sake of contradiction, we suppose that $\Delta$ is conflict-free in $\supports_i$ for all $i \in N$, and is not conflict-free in $F(\prof {\supports})$. 
This means that there are two arguments $A, B \in \Delta$ such that $A$ supported or secondary attacks $B$ in $F(\prof {\supports})$.

We now show that in the scenario where $A$ is supported attacking $B$, i.e., there is a sequence of arguments in $F(\prof {\supports})$ such that $A_1 \supports, \ldots, \supports A_m$, $A_m \attacks B$ in which $A_1 = A$, our proposition holds. 
From the assumption we know that $(A_m \attacks B) \in \attacks_i$ for all $i \in N$. In addition, every agent agrees $A_1 \supports, \ldots, \supports A_m$.
Thus, every agent agrees $A_1 \supports, \ldots, \supports A_m$, $A_m \attacks B$, i.e., $\Delta$ is not conflict-free in $\supports_i$ for all $i \in N$, in contradiction to our earlier assumption.

For the scenario where $A$ is secondary attacking $B$, we note that the proof is similar to the proof of the one where $A$ is supported attacking $B$. Thus, we have the proposition.
\myendofproof

\subsection*{Proof of Proposition~\ref{prop:dadmissibility}}
Let $F$ be the unanimity rule.
Let $\prof {\supports} = (\supports_1, \ldots, \supports_n)$ be a profile of bipolar argumentation frameworks.
Let $\Delta \subseteq \Arg$ be the set of arguments under consideration.

We suppose that $\Delta$ is d-admissible in $\supports_i$ for all $i \in N$. Then, $\Delta$ is conflict-free in $\supports_i$ for all $i \in N$ as well. 
By Proposition~\ref{prop:cf}, $\Delta$ is conflict-free in $F(\prof {\supports})$.
By the assumption that all agents report the same set of attacks, we get that for each argument $A \in \Delta$, $A$ is defended by $\Delta$ in $\supports_i$ for all $i \in N$. It follows that $A$ is defended by $\Delta$ in $F(\prof {\supports})$ as well. Thus, $\Delta$ is d-admissible in $F(\prof {\supports})$.

We omit the relative easy proof for s-admissibility.
\myendofproof

\subsection*{Proof of Lemma~\ref{lemma:dictator}}
Take any property $P$ that is non-simple and disjunctive.
Take any aggregation rule $F$ that is unanimous, grounded, neutral, independent and preserves $P$. 
By the assumption that $F$ is neutral and independent, $F$ is determined by a set of winning coalitions $\mathcal W \subseteq 2^{N}$.
What we need to do is proving that $\mathcal W$ is an ultrafilter, i.e., to show that $\mathcal W$ is closed under intersection, $\mathcal W$ satisfies maximality, and $\emptyset \notin \mathcal W$.

\paragraph{$\emptyset \notin \mathcal W$} This is a direct consequence of the assumption that $F$ is grounded.

\paragraph{Maximality}
Take any set of agents $C \subseteq N$. 
Consider a profile in which exactly the individuals in $C$ propose $\sup_1$ and exactly those in $N \backslash C$ propose $\sup_2$.
Since $P$ is disjunctive, we know that one of $\sup_1$ and $\sup_2$ must be part of $F(\prof {\supports})$. Hence $C \in \mathcal W$ or $N\backslash C \in \mathcal W$.

\paragraph{Closure under intersection}
Take any two winning coalitions $C_1, C_2 \in \mathcal W$. Assume toward a contradiction that $C_1 \cap C_2 \notin \mathcal W$.
Consider a profile in which exactly the individuals in $C_1$ propose $\sup_1$, exactly the individuals in $C_2$ propose $\sup_2$, and exactly the individuals in $N \backslash (C_1 \cap C_2)$ propose $\sup_3$.
Now, since $C_1$ and $C_2$ are winning coalitions, $\sup_1$ and $\sup_2$ must be part of $F(\prof {\supports})$.
Hence, due to $C_1 \cap C_2 \notin \mathcal W$ and $\mathcal W$ satisfying maximality, we have $N \backslash (C_1 \cap C_2)\in \mathcal W$. Since the individuals in $N \backslash (C_1 \cap C_2)$ propose $\sup_3$, we have $\sup_3 \in F(\prof {\supports})$. 
But we have assumed that $F$ preserves non-simplicity, i.e., $\sup_1, \sup_2, \sup_3$ cannot be accepted together in $F(\prof {\supports})$. Thus, $C_1 \cap C_2 \in \mathcal W$.
\myendofproof

\subsection*{Proof of Theorem~\ref{thm:s-preferred-dictator}}
\noindent
Suppose $\card{\Arg\,}\geq 5$. Let $P$ be the properties representing a given set of arguments being either a d-preferred, a s-preferred or a c-preferred extension. 
%
%
Thus, by Lemma~\ref{lemma:dictator}, we need to show that $P$ is non-simple and disjunctive in this case.

\begin{figure}
\[
\begin{tabular}{c@{\qquad}c@{\qquad}c@{\qquad}c@{\qquad}c}

\begin{tikzpicture}[->,>=latex,thick,shorten >=1pt,font=\small,scale=1]
  \vertex (A) at (0,0) {$A$};
  \vertex (B) at (1.8,0) {$B$};
  \vertex (C) at (3.6,0) {$C$};
  \vertex (D) at (5.4,0) {$D$};
  \vertex (E) at (7.2,0) {$E$};
  \draw [dashed, ->] (A) --node[above,above]{\scriptsize$sup_1$} (B);
  \draw [dashed, ->] (B) --node[above,above]{\scriptsize$sup_2$} (C);
  \draw [dashed, ->] (C) --node[midway,above]{\scriptsize$sup_3$} (D);
  \draw [->] (D) edge[bend left=20] (E);
  \draw[->] (B) edge[loop] (B);
  \draw[->] (C) edge[loop] (C);
  \draw [->] (E) edge[bend left=20] (D);
\end{tikzpicture}&
\begin{tikzpicture}[->,>=latex,thick,shorten >=1pt,font=\small,scale=1]
  \vertex (A) at (0,0) {$A$};
  \vertex (B) at (2,0) {$B$};
  \vertex (C) at (3,1) {$C$};
  \vertex (D) at (3,-1) {$D$};
  \vertex (E) at (4,0) {$E$};
  \draw [dashed, ->] (A) --node[above,above]{\scriptsize$sup_1$} (C);
  \draw [dashed, ->] (A) --node[above,above]{\scriptsize$sup_2$} (D);
  \draw [->] (B) -- (C);
  \draw [->] (B) -- (D);
  \draw [->] (C) -- (E);
  \draw [->] (D) -- (E);
\end{tikzpicture}\\
Non-simplicity & Disjunctiveness
\end{tabular}
\]
\vspace*{-5pt}\caption{Scenarios used in the proof of Theorem~\ref{thm:s-preferred-dictator}\label{fig:s-prefer}}
\end{figure}
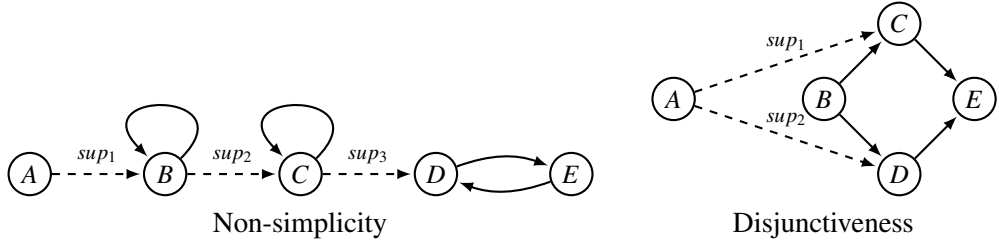


\paragraph{Non-simplicity}
Let $\Arg = \{A,B,C,D,E,\ldots\}$, let $\attacks = \{D \attacks E, E \attacks D, B \attacks B, C \attacks C\}$.
Now we focus on $\Arg\setminus\{B,C,D\}$ as the subset of arguments that may (or may not) form either a d-preferred, a s-preferred or a c-preferred extension. 
We define $\Sup = \emptyset$, $\sup_1 = (A \supports B)$, $\sup_2 = (B \supports C)$, and $\sup_3 = (C \supports D)$.
This scenario is depicted in the top part of Figure~\ref{fig:s-prefer}.
Consider all BAFs of the form $\BAF=\langle\Arg,\attacks,\Sup\cup S\rangle$ with $S\subseteq\{\sup_1,\sup_2,\sup_3\}$.
It is not difficult for the reader to verify that, for $S\not=\{\sup_1,\sup_2, \sup_3\}$, $B$ and $C$ are self-attacking, $D$ is attacked by $E$. Thus, they are unacceptable with respect to $\{A,E\}$, i.e., $\Arg\setminus\{B,C,D\}$ is a d-preferred, a s-preferred and a c-preferred extension. On the other hand, for $S=\{\sup_1,\sup_2,\sup_3\}$, $\Arg\setminus\{B,C,D\}$ is not a d-preferred nor a s-preferred or c-preferred extension as $E$ is set-attacked by $A$.
Thus, $P$ is non-simple.

\paragraph{Disjunctiveness}
Let $\Arg = \{A,B,C,D,E\ldots\}$, let $\attacks = \{B \attacks C, B \attacks D, C \attacks E, D \attacks E \}$.
We focus on $\Arg\setminus\{C,D,E\}$ as the subset of arguments that may (or may not) form a s-preferred extension. 
We define $\Sup = \emptyset$,
$\sup_1 = (A \supports C)$, $\sup_2 = (A \supports D)$.
This scenario is depicted in the bottom part of Figure~\ref{fig:s-prefer}.
Consider all BAFs of the form $\BAF=\langle\Arg,\attacks,\Sup\cup S\rangle$ with $S\subseteq\{\sup_1,\sup_2\}$.
For $S\not=\emptyset$, $\Arg\setminus\{C,D,E\}$ is a d-preferred, a s-preferred, and a c-preferred extension. On the other hand, for $S=\emptyset$, $\Arg\setminus\{C,D,E\}$ is not a s-preferred nor a d-preferred or a c-preferred extension as $E$ is defended by $B$. 
Thus, $P$ is disjunctive.
\myendofproof

\subsection*{Proof of Theorem~\ref{thm:n-preferred-dictator}}
Similar to Theorem~\ref{thm:s-preferred-dictator}, we still need to show that the property of being a d-preferred, a s-preferred, or a c-preferred extension is non-simple and disjunctive when the interpretation of support is necessary support.
\paragraph{Non-simplicity}
Let $\Arg = \{A,B,C,D,E,\ldots\}$, let $\attacks = \{D \attacks E, E \attacks D, B \attacks B, C \attacks C\}$.
Now we focus on $\Arg\setminus\{B,C,D\}$ as the subset of arguments that may (or may not) form either a d-preferred, a s-preferred or a c-preferred extension. 
We define $\Sup = \emptyset$, $\sup_1 = (B \supports A)$, $\sup_2 = (C \supports B)$, and $\sup_3 = (D \supports C)$, as illustrated in the top part of Figure~\ref{fig:n-prefer}.
Consider all BAFs of the form $\BAF=\langle\Arg,\attacks,\Sup\cup S\rangle$ with $S\subseteq\{\sup_1,\sup_2,\sup_3\}$.
It is not difficult for the reader to verify that, for $S\not=\{\sup_1,\sup_2, \sup_3\}$, $B$ and $C$ are self-attacking, $D$ is attacked by $E$. Thus, they are unacceptable with respect to $\{A,E\}$. In the meantime, $A$ is not attacked by any other argument, $E$ defends itself, $\{A,E\}$ is conflict-free, i.e., $\Arg\setminus\{B,C,D\}$ is a d-preferred, a s-preferred, and a c-preferred extension. On the other hand, for $S=\{\sup_1,\sup_2,\sup_3\}$, $\Arg\setminus\{B,C,D\}$ is neither a d-preferred nor a s-preferred nor c-preferred extension as $E$ secondary attacks $A$.
Thus, $P$ is non-simple.

\paragraph{Disjunctiveness}
Let $\Arg = \{A,B,C,D\ldots\}$, let $\attacks = \{B \attacks C, B \attacks D\}$.
We focus on $\Arg\setminus\{A,C,D\}$ as the subset of arguments that may (or may not) form a s-preferred extension. 
We define $\Sup = \emptyset$,
$\sup_1 = (C \supports A)$, $\sup_2 = (D \supports A)$, as illustrated in the bottom part of Figure~\ref{fig:n-prefer}.
Consider all BAFs of the form $\BAF=\langle\Arg,\attacks,\Sup\cup S\rangle$ with $S\subseteq\{\sup_1,\sup_2\}$.
For $S\not=\emptyset$, $B$ secondary attacks $A$ and directly attacks $C$ and $D$. Thus, $\Arg\setminus\{A,C,D\}$ is a d-preferred, a s-preferred, and a c-preferred extension. 
On the other hand, for $S=\emptyset$, $\Arg\setminus\{A,C,D\}$ is neither a s-preferred nor a d-preferred nor a c-preferred extension as $A$ is not attacked by any other argument and thus should be included in every d-preferred (s-preferred, c-preferred) extension. 
Thus, $P$ is disjunctive.
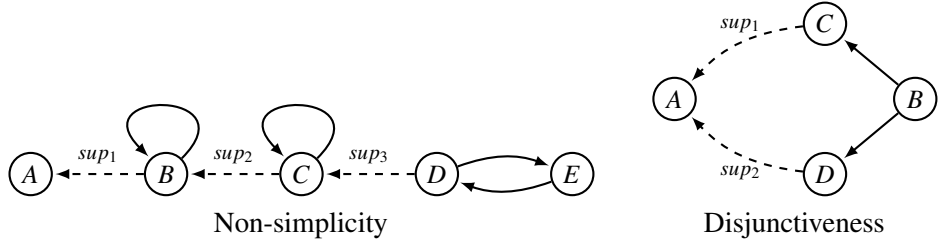
\begin{figure}
\[
\begin{tabular}{c@{\qquad}c@{\qquad}c@{\qquad}c@{\qquad}c}

\begin{tikzpicture}[->,>=latex,thick,shorten >=1pt,font=\small,scale=1]
  \vertex (A) at (0,0) {$A$};
  \vertex (B) at (1.8,0) {$B$};
  \vertex (C) at (3.6,0) {$C$};
  \vertex (D) at (5.4,0) {$D$};
  \vertex (E) at (7.2,0) {$E$};
  \draw [dashed, ->] (B) --node[above,above]{\scriptsize$sup_1$} (A);
  \draw [dashed, ->] (C) --node[above,above]{\scriptsize$sup_2$} (B);
  \draw [dashed, ->] (D) --node[midway,above]{\scriptsize$sup_3$} (C);
  \draw [->] (D) edge[bend left=20] (E);
  \draw[->] (B) edge[loop] (B);
  \draw[->] (C) edge[loop] (C);
  \draw [->] (E) edge[bend left=20] (D);
\end{tikzpicture}&
\begin{tikzpicture}[->,>=latex,thick,shorten >=1pt,font=\small,scale=1]
  \vertex (A) at (0,0) {$A$};
  \vertex (C) at (2,1) {$C$};
  \vertex (D) at (2,-1) {$D$};
  \vertex (B) at (3.2,0) {$B$};
  \draw [dashed, ->] (C) edge[bend right=20]node[above,above]{\scriptsize$sup_1$} (A);
  \draw [dashed, ->] (D) edge[bend left=20]node[above,below]{\scriptsize$sup_2$} (A);
  \draw [->] (B) -- (C);
  \draw [->] (B) -- (D);
\end{tikzpicture}\\
Non-simplicity & Disjunctiveness
\end{tabular}
\]
\vspace*{-5pt}\caption{Scenarios used in Theorem~\ref{thm:n-preferred-dictator}.\label{fig:n-prefer}}
\end{figure}
\myendofproof

\subsection*{Proof of Theorem~\ref{thm:stability-dictator}}
\noindent
Suppose $\card{\Arg\,}\geq 5$. Let $P$ be the BAF-properties representing a given set of arguments being a stable extension. 
%
%
We need to demonstrate that $P$ is non-simple and disjunctive in this case.
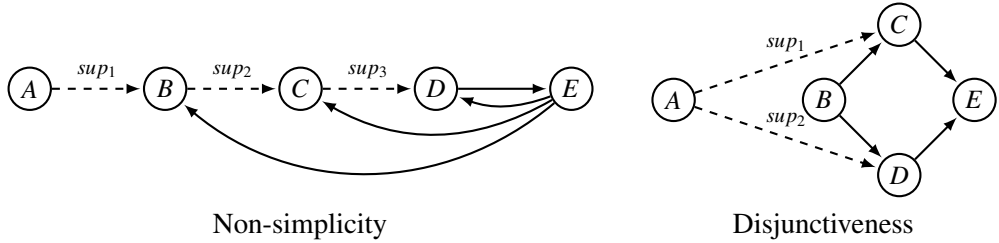
\begin{figure}[h]
\[
\begin{tabular}{c@{\qquad}c@{\qquad}c@{\qquad}c@{\qquad}c}

\begin{tikzpicture}[->,>=latex,thick,shorten >=1pt,font=\small,scale=1]
  \vertex (A) at (0,0) {$A$};
  \vertex (B) at (1.8,0) {$B$};
  \vertex (C) at (3.6,0) {$C$};
  \vertex (D) at (5.4,0) {$D$};
  \vertex (E) at (7.2,0) {$E$};
  \draw [dashed, ->] (A) --node[above,above]{\scriptsize$sup_1$} (B);
  \draw [dashed, ->] (B) --node[above,above]{\scriptsize$sup_2$} (C);
  \draw [dashed, ->] (C) --node[midway,above]{\scriptsize$sup_3$} (D);
  \draw [->] (D) -- (E);
  \draw [->] (E) edge[bend left=40] (B);
  \draw [->] (E) edge[bend left=30] (C);
  \draw [->] (E) edge[bend left=20] (D);
\end{tikzpicture}&
\begin{tikzpicture}[->,>=latex,thick,shorten >=1pt,font=\small,scale=1]
  \vertex (A) at (0,0) {$A$};
  \vertex (B) at (2,0) {$B$};
  \vertex (C) at (3,1) {$C$};
  \vertex (D) at (3,-1) {$D$};
  \vertex (E) at (4,0) {$E$};
  \draw [dashed, ->] (A) --node[above,above]{\scriptsize$sup_1$} (C);
  \draw [dashed, ->] (A) --node[above,above]{\scriptsize$sup_2$} (D);
  \draw [->] (B) -- (C);
  \draw [->] (B) -- (D);
  \draw [->] (C) -- (E);
  \draw [->] (D) -- (E);
\end{tikzpicture}\\
Non-simplicity & Disjunctiveness
\end{tabular}
\]
\vspace*{-5pt}\caption{Scenarios used in the proof of Theorem~\ref{thm:stability-dictator}\label{fig:stable}}
\end{figure}


\paragraph{Non-simplicity}
Let $\Arg = \{A,B,C,D,E,\ldots\}$, let $\attacks = \{D \attacks E, E \attacks B, E \attacks C, E \attacks D\}$.
We focus on $\Arg\setminus\{B,C,D\}$ as the subset of arguments that may (or may not) form a stable extension. 
We define $\Sup = \emptyset$.
$\sup_1 = (A \supports B)$, $\sup_2 = (B \supports C)$, and $\sup_3 = (C \supports D)$.
This scenario is depicted in the top part of Figure~\ref{fig:stable}.
Consider all BAFs of the form $\BAF=\langle\Arg,\attacks,\Sup\cup S\rangle$ with $S\subseteq\{\sup_1,\sup_2,\sup_3\}$.
The reader should be able to verify that, indeed, for $S\not=\{\sup_1,\sup_2, \sup_3\}$, $\Arg\setminus\{B,C,D\}$ is a stable extension. 
For example, for $S=\{\sup_1,\sup_2\}$, $B$, $C$, and $D$ are attacked by $E$. In the meantime, $\{A,E\}$ is conflict-free, i.e., $\Arg\setminus\{B,C,D\}$ is a stable extension. 
On the other hand, for $S=\{\sup_1,\sup_2,\sup_3\}$, $\Arg\setminus\{B,C,D\}$ is not a stable extension as $E$ is set-attacked by $A$.
Thus, $P$ is non-simple.

\paragraph{Disjunctiveness}
Let $\Arg = \{A,B,C,D,E\ldots\}$, let $\attacks = \{B \attacks C, B \attacks D, C \attacks E, D \attacks E \}$.
We focus on $\Arg\setminus\{C,D,E\}$ as the subset of arguments that may (or may not) form a stable extension. 
We define $\Sup = \emptyset$,
$\sup_1 = (A \supports C)$, $\sup_2 = (A \supports D)$.
This scenario is depicted in the bottom part of Figure~\ref{fig:stable}.
Consider all BAFs of the form $\BAF=\langle\Arg,\attacks,\Sup\cup S\rangle$ with $S\subseteq\{\sup_1,\sup_2\}$.
The reader should be able to verify that, indeed, for $S\not=\emptyset$, $\Arg\setminus\{C,D,E\}$ is a stable extension. On the other hand, for $S=\emptyset$, $\Arg\setminus\{C,D,E\}$ is not a stable extension as $E$ is defended by $B$. 
Thus, $P$ is disjunctive.
\myendofproof

\subsection*{Proof of Lemma~\ref{lemma:d-accptability}}
Let $\Delta \subseteq \Arg$ be a d-admissible set of $\supports_1$.
We need to show that $\Delta$ is a d-admissible set of $\supports_2$. To achieve this, we need to demonstrate that in $\supports_2$, ($i$) $\Delta$ is conflict-free, and ($ii$) $\Delta$ defends all of its members.

For ($i$), we need to show that $\Delta$ is conflict-free in $\supports_2$.
If not, then there are two arguments $A, B\in \Delta$ such $A$ directly, supported, or secondary attacks $B$.
If $A$ directly attacks $B$ in $\supports_2$, then $A$ directly attacks $B$ in $\supports_1$ as the pair of BAFs report the same set of attacks, which contradicts the assumption that $\Delta$ is conflict-free in $\supports_1$.
If $A$ supported attacks $B$ in $\supports_2$, then there is a sequence of argument $(A_1, \ldots, A_n)$ such that $A_1 \supports A_2, \ldots, A_{n-1} \attacks A_n$, $A= A_1$, and $A_n = B$. As $\supports_1 \supseteq \supports_2$ and $A_{n-1} \attacks A_n$ is the case, we know that $A_1 \supports A_2, \ldots, A_{n-1} \attacks A_n$ in $\supports_2$ as well, which means that there are two arguments $A, B\in\Delta$ such that $A$ supported attacks $B$, contradicting the fact that $\Delta$ is conflict-free in $\supports_1$.
If $A$ supported attacks $B$ in $\supports_2$, this case is similar to the case that $A$ supported attacks $B$, which will lead to that $\Delta$ failing to satisfy conflict-freeness in $\supports_1$.
Thus,  $\Delta$ is conflict-free in $\supports_2$.

For ($ii$), we need to show that for every argument $A \in\Delta$, if $B \attacks A$, then there is a $C \in \Delta$ such that $C \attacks B$, i.e, $\Delta$ defends all its members in $\supports_2$. Clearly, this is true as $\supports_1$ and $\supports_2$ report the same set of attacks, and $\Delta$ defends all its members in $\supports_1$.
\myendofproof

\subsection*{Proof of Lemma~\ref{lemma:s-accptability}}
We need to show that a s-admissible set of arguments $\Delta \subseteq \Arg$ of $\supports_1$ is a s-admissible set of $\supports_2$.
To arrive at this goal, we need to demonstrate that in $\supports_2$, ($i$) $\Delta$ is conflict-free, ($ii$) $\Delta$ defends all of its members, and ($iii$) $\Delta$ is safe.
For ($i$) and ($ii$), the proofs are the same as the ones in Lemma~\ref{lemma:d-accptability}. It remains to show that  $\Delta$ is safe in $\supports_2$.
If not, then there is an argument $B \in \Arg$ such that $\Delta$ set-attacks $B$ and $\Delta$ set-supports $B$, or $B \in \Delta$.
If $\Delta$ set-supports $B$.
there are two arguments $A\in \Delta$ such $A$ directly, supported, or secondary attacks $B$.
Using the construction similar to Lemma~\ref{lemma:d-accptability}, it is easy to verify that under this assumption, $\Delta$ set-attacks $B$ in $\supports_1$.
According to the assumption that $\supports_1 \supseteq \supports_2$, $\Delta$ set-supports $B$ in $\supports_1$.
Then, $\Delta$ is not safe in $\supports_1$, contradiction.
\myendofproof

\subsection*{Proof of Lemma~\ref{lemma:c-accptability}}
Once again, we need to show that every c-admissible set $\Delta$ of arguments of $\supports_1$ is a c-admissible set of $\supports_2$.
To arrive at this goal, we need to show that in $\supports_2$ ($i$), $\Delta$ is conflict-free, ($ii$) $\Delta$ defends all of its members, and ($iii$) $\Delta$ is closed.
For ($i$) and ($ii$), the proofs are the same as the ones in Lemma~\ref{lemma:d-accptability}. It remains to show that  $\Delta$ is closed in $\supports_2$.
If not, then there is an argument $A \in \Delta$ and an argument $B \in \Arg$ such that $A$ supports $B$, and $B \notin \Delta$.
According to the assumption that $\supports_1 \supseteq \supports_2$, $A$ supports $B$, and $B \notin \Delta$ in $\supports_1$, we get that $\Delta$ is not closed in $\supports_1$, contradiction.
\myendofproof

\subsection*{Proof of Proposition~\ref{proposition:dpf-acceptability}}
Let $A\in\Arg$ be the argument under consideration, and we suppose that $A$ is acceptable under a d-preferred extension of $\supports_i$ for all $i \in N$.
Let $F$ be the unanimity rule.
Clearly, $F(\prof \supports)$ is a subset of $\supports_i$ for all $i\in N$.
Without loss of generality, we take $\supports_i$ to be the BAF under consideration.
Then, $\supports_i \supseteq F(\prof \supports)$. Furthermore, $A$ is acceptable under a d-preferred extension $\Delta_1 \subseteq \Arg$ of $\supports_i$ i.e., $A\in \Delta_1$.
Note that $\Delta_1$ is a d-admissible set as well.
According to Lemma~\ref{lemma:d-accptability}, $\Delta_1$ is a d-admissible set of $F(\prof \supports)$ as well. 
By Fact~\ref{fact:acceptability}, we know that there is a d-preferred extension $\Delta_2 \subseteq \Arg$ of $F(\prof \supports)$ such that $\Delta_2 \supseteq \Delta_1$, and $A$ is a member of $\Delta_2$. That is to say, $A$ is acceptable under a d-preferred extension of $F(\prof \supports)$. We are done.
\myendofproof

\end{document}